%% file: acl_latex.tex
\documentclass[11pt]{article}

\usepackage[final]{main}

\providecommand{\nolinenumbers}{}
\providecommand{\linenumbers}{}

\usepackage{times}
\usepackage{latexsym}
\usepackage[T1]{fontenc}
\usepackage[utf8]{inputenc}
\usepackage{microtype}
\usepackage{inconsolata}
\usepackage{graphicx}
\usepackage{fancyhdr}

\input{preamble/_preamble_includes}

\usepackage{newtxmath}

\definecolor{mydarkblue}{rgb}{0,0.08,0.45}
\hypersetup{colorlinks=true,linkcolor=mydarkblue,citecolor=mydarkblue,
  filecolor=mydarkblue,urlcolor=mydarkblue}

\newcommand{\hdrcenter}[1]{\raisebox{\dimexpr-0.5\height+0.5\depth\relax}{#1}}
\fancypagestyle{titlelogo}{%
  \fancyhf{}%
  \fancyhead[L]{%
    \hdrcenter{\includegraphics[height=0.92cm]{logo/cmu-seal.png}}%
    \hspace{2.2mm}%
    \hdrcenter{\fontsize{13}{15}\selectfont\bfseries Carnegie Mellon University}%
  }%
  \fancyhead[R]{\hdrcenter{\includegraphics[height=0.62cm]{logo/mld.png}}}%
  \fancyfoot[C]{\thepage}%
}

\title{Momentum Streams for Optimizer-Inspired Transformers}

\author{
  Jingchu Gai\thanks{Authors are listed in alphabetical order.} \quad
  Nai-Chieh Huang\footnotemark[1] \quad Jiayun Wu\footnotemark[1] \\
  Carnegie Mellon University \\
  \texttt{jgai@andrew.cmu.edu} \quad \texttt{naichieh@andrew.cmu.edu} \quad
  \texttt{jiayunw@cmu.edu}
}

\begin{document}
\raggedbottom
% main.sty's `final' mode sets \pagestyle{empty}; restore page numbers.
\pagestyle{plain}
\maketitle
% Page-1 header band carrying the institutional logos.
\thispagestyle{titlelogo}

% \vspace{-10mm}

% \vspace{-5mm}
\vspace{-10mm}
\begin{figure}[!ht]
    \centering
    \includegraphics[width=15.5cm]{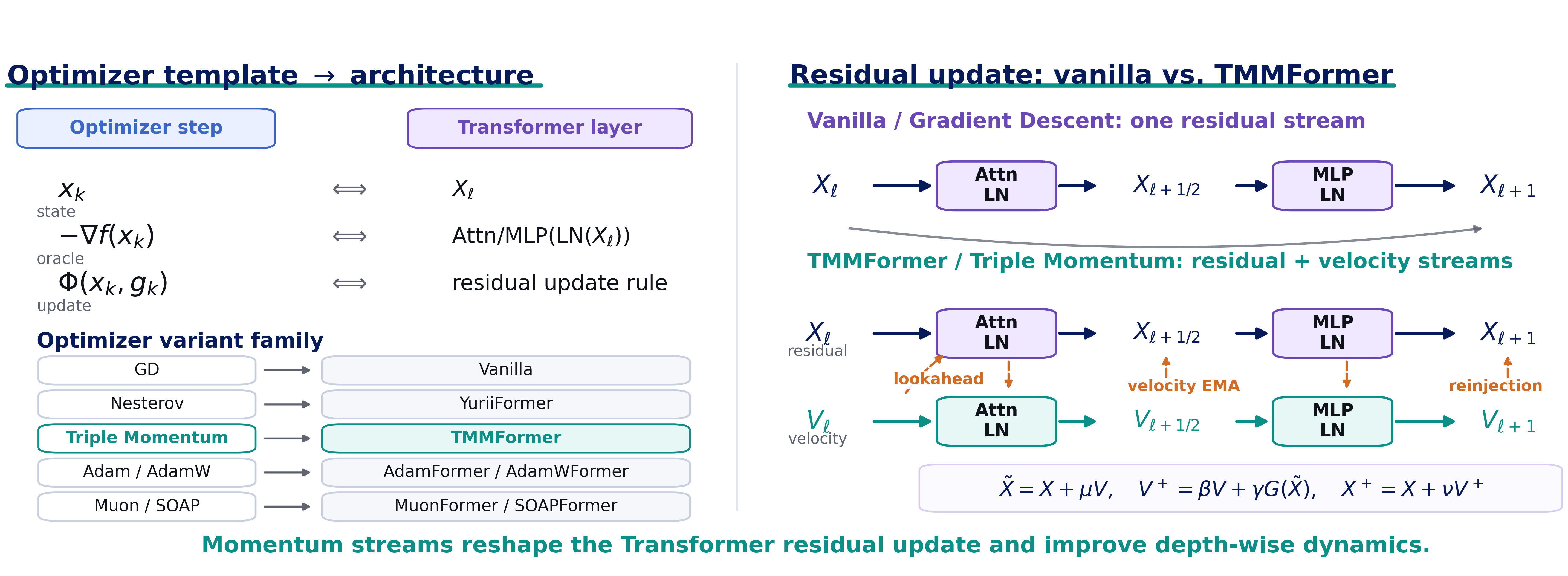}
    % \caption{Caption}
    \label{fig:placeholder}
\end{figure}
\begin{abstract}
The residual update of a pre-norm Transformer layer admits an interpretation as one step of a first-order optimizer acting on a surrogate token energy, wherein the attention and MLP sublayers function as gradient oracles.
Based on this observation, we build a family of
optimizer-inspired Transformers (triple-momentum, Adam/AdamW, Muon, SOAP) and
compare them under matched compute. In our main pretraining experiment, the triple-momentum TMMFormer achieves the lowest validation loss, outperforming the vanilla Transformer and prior architectural variants.
A controlled ablation and supporting theory show that momentum, not preconditioning, is the main source of the gain. We further show that
TMMFormer and other momentum-based designs reach flatter minima than the
vanilla Transformer, which leads to less forgetting and better generalization.

% We ask which optimizer the depth
% recurrence should implement and treat the optimizer template as an
% architectural design axis. Holding a single pre-norm attention/MLP backbone
% fixed and varying only the per-layer update rule, we build a family of
% optimizer-inspired Transformers (triple-momentum, Adam/AdamW, Muon, SOAP) and
% compare them under matched compute. The triple-momentum TMMFormer is the
% best validation-loss model, beating the vanilla stream and the prior YuriiFormer;
% the gain is architectural---robust to the training optimizer and learning
% rate---and is consistent with the local theory of triple-momentum depth
% recurrences. A controlled momentum$\times$preconditioning ablation
% attributes the improvement to the momentum stream rather than preconditioning,
% and a full-block Jacobian analysis shows it turns the residual update into a
% second-order depth filter that damps slow token-feature modes. The resulting
% models settle into flatter minima that forget less and generalize better out
% of distribution.
\end{abstract}

\begin{center}\small
\textbf{Code:}~\url{https://github.com/gaijingchu/Momentum-Streams-for-Optimizer-Inspired-Transformers}\\
\textbf{Checkpoints:}~\url{https://huggingface.co/gaijingchu/momentum-streams-checkpoints}
\end{center}

\setcounter{tocdepth}{2}

\input{body/introduction}
\input{body/related_work}
\input{body/preliminary}

\input{body/optimizer_inspired_transformer}
\input{body/momentum}
\input{body/properties_and_discussion}
\input{body/conclusion}

% Entries for the entire Anthology, followed by custom entries
\bibliographystyle{acl_natbib}
\bibliography{custom}

\newpage
\tableofcontents
\newpage

\appendix

\input{appendix/optimizer_rules}
\input{appendix/extra_formers}
\input{appendix/matrix_preconditioning_design}
\input{appendix/experimental_details}
\input{appendix/precond_redundancy}
\input{appendix/momentum_theory}
\input{appendix/landscape_measurement}
\input{appendix/forgetting_generalization}
\input{appendix/sam_wsd}

\end{document}

%% file: preamble/_preamble_includes.tex
\input{preamble/header}

\input{preamble/theorem_style}

%% file: preamble/header.tex
\usepackage{geometry}
\usepackage{amsmath,amssymb,amsthm,mathtools,bm}
\usepackage{etoolbox}
\allowdisplaybreaks[4]
\DeclareMathSizes{11}{9}{7}{5}
\DeclareMathSizes{10.95}{9}{7}{5}
\usepackage{xcolor}
\usepackage[most]{tcolorbox}
\definecolor{skyblue}{HTML}{4A90D9}
\usepackage{booktabs}
\usepackage{enumitem}
\usepackage{hyperref}
\usepackage{capt-of}

\DeclareCaptionFont{figcap}{\fontsize{9pt}{11pt}\selectfont\itshape}
\makeatletter
\@ifpackageloaded{lineno}{%
  \usepackage{zref-savepos}%
  \newcounter{lnxpos}%
  \renewcommand\makeLineNumber{%
    \stepcounter{lnxpos}%
    \zsavepos{lnx\arabic{lnxpos}}%
    \ifdim\zposx{lnx\arabic{lnxpos}}sp>0.5\paperwidth
      \makeLineNumberRight
    \else
      \makeLineNumberLeft
    \fi}%
}{}
\makeatother

%% file: preamble/theorem_style.tex
\newtheorem{theorem}{Theorem}[section]
\newtheorem{lemma}[theorem]{Lemma}
\newtheorem{proposition}[theorem]{Proposition}
\newtheorem{corollary}[theorem]{Corollary}
\theoremstyle{remark}
\newtheorem{remark}[theorem]{Remark}

\tcbset{
  thmblue/.style={
    enhanced, breakable,
    colframe=blue!40!black, boxrule=0.35pt, arc=1mm,
    coltitle=black, fonttitle=\small\sffamily\bfseries,
    colbacktitle=blue!15!white, colback=blue!5!white,
    boxed title style={sharp corners, boxrule=0pt,
      top=1pt, bottom=0.5pt, left=4mm, right=4mm,
      borderline={0.5pt}{0pt}{blue!20!white}},
    attach boxed title to top left={xshift=4mm,yshift*=-1.2mm},
    boxsep=1.5mm, top=1.5mm, bottom=1.5mm, left=2.5mm, right=4mm,
    before skip=10pt, after skip=10pt,
    before upper={\let\linenomath\relax \let\endlinenomath\relax
      \let\linenomathAMS\relax}
  },
  thmmag/.style={
    enhanced, breakable,
    colframe=magenta!45!black, boxrule=0.35pt, arc=1mm,
    coltitle=black, fonttitle=\small\sffamily\bfseries,
    colbacktitle=magenta!15!white, colback=magenta!5!white,
    boxed title style={sharp corners, boxrule=0pt,
      top=1pt, bottom=0.5pt, left=4mm, right=4mm,
      borderline={0.5pt}{0pt}{magenta!25!white}},
    attach boxed title to top left={xshift=4mm,yshift*=-1.2mm},
    boxsep=1.5mm, top=1.5mm, bottom=1.5mm, left=2.5mm, right=4mm,
    before skip=10pt, after skip=10pt,
    before upper={\let\linenomath\relax \let\endlinenomath\relax
      \let\linenomathAMS\relax}
  }
}
\newtcolorbox{formerbox}[1]{
  enhanced jigsaw, breakable,
  pad at break*=0mm,
  colframe=skyblue, boxrule=0.6pt, arc=1mm,
  title={\textbf{#1}},
  coltitle=black, fonttitle=\small\sffamily\bfseries,
  colbacktitle=skyblue!20!white,
  colback=skyblue!5!white,
  boxed title style={sharp corners, boxrule=0pt, top=1pt, bottom=0.5pt,
    left=4mm, right=4mm,
    borderline={0.5pt}{0pt}{skyblue!20}},
  attach boxed title to top left={xshift=4mm,yshift*=-1.2mm},
  grow to left by=2.5mm, grow to right by=2.5mm,
  boxsep=0.6mm, top=2mm, bottom=2mm, left=0.8mm, right=0.8mm,
  before skip=8pt, after skip=8pt,
  parbox=false,
  before upper={\let\linenomath\relax \let\endlinenomath\relax
    \let\linenomathAMS\relax}
}

\BeforeBeginEnvironment{formerbox}{\nolinenumbers}
\AfterEndEnvironment{formerbox}{\linenumbers}

\newtcolorbox{tablebox}[1]{
  enhanced, hbox,
  title={#1},
  colback=teal!5, colframe=teal, coltext=black, coltitle=white,
  fonttitle=\small\bfseries, arc=1mm, boxrule=1pt,
  halign=center, fontupper=\normalsize,
  boxsep=1pt, left=2pt, right=2pt, top=2pt, bottom=2pt,
  toptitle=1pt, bottomtitle=1pt,
  before skip=6pt, after skip=4pt
}
\BeforeBeginEnvironment{tablebox}{\nolinenumbers\begin{center}}
\AfterEndEnvironment{tablebox}{\end{center}\linenumbers}

\newtcolorbox{methodbox}[1]{
  enhanced jigsaw, breakable,
  pad at break*=0mm,
  colframe=orange!75!white, boxrule=0.35pt, arc=1mm,
  title={\textbf{#1}},
  coltitle=black, fonttitle=\small\sffamily\bfseries,
  colbacktitle=orange!15!white,
  colback=orange!5!white,
  boxed title style={sharp corners, boxrule=0pt, top=1pt, bottom=0.5pt,
    left=4mm, right=4mm,
    borderline={0.5pt}{0pt}{orange!10}},
  attach boxed title to top left={xshift=4mm,yshift*=-1.2mm},
  boxsep=1.5mm, top=1mm, bottom=1mm, left=1mm, right=1mm,
  before skip=10pt, after skip=10pt,
  parbox=false,
  before upper={\let\linenomath\relax \let\endlinenomath\relax
    \let\linenomathAMS\relax}
}
\BeforeBeginEnvironment{methodbox}{\nolinenumbers}
\AfterEndEnvironment{methodbox}{\linenumbers}

\newtcolorbox{contribbox}[1]{
  enhanced jigsaw, breakable,
  pad at break*=0mm,
  colframe=green!40!black, boxrule=0.35pt, arc=1mm,
  title={\textbf{#1}},
  coltitle=black, fonttitle=\small\sffamily\bfseries,
  colbacktitle=green!15!white,
  colback=green!5!white,
  boxed title style={sharp corners, boxrule=0pt, top=1pt, bottom=0.5pt,
    left=4mm, right=4mm,
    borderline={0.5pt}{0pt}{green!20!white}},
  attach boxed title to top left={xshift=4mm,yshift*=-1.2mm},
  boxsep=1.5mm, top=1.5mm, bottom=1.5mm, left=1.5mm, right=1mm,
  before skip=10pt, after skip=10pt,
  parbox=false,
  before upper={\let\linenomath\relax \let\endlinenomath\relax
    \let\linenomathAMS\relax}
}
\BeforeBeginEnvironment{contribbox}{\nolinenumbers}
\AfterEndEnvironment{contribbox}{\linenumbers}

\DeclareMathOperator{\Attn}{Attn}
\DeclareMathOperator{\MLP}{MLP}
\DeclareMathOperator{\NS}{NS}

\DeclareMathOperator{\diag}{diag}
\DeclareMathOperator{\LN}{LN}
\DeclareMathOperator{\softplus}{softplus}
\DeclareMathOperator{\reshape}{reshape}

%% file: body/introduction.tex
\section{Introduction}

The Transformer has improved mostly by redesigning its attention and MLP
sublayers or by scaling them up. The \emph{depth recurrence}---how each layer
updates the residual stream---is almost always left as a plain additive
update and is rarely treated as a design choice in its own right. A recent
line of work suggests that a pre-norm Transformer layer can be
read as one step of a first-order optimizer on a surrogate token energy, with
attention and MLP acting as gradient
oracles~\citep{geshkovski2025mathematical,zimin2026yuriiformer}. Under this
correspondence the vanilla residual block is plain gradient descent, and the
recent YuriiFormer~\citep{zimin2026yuriiformer} is Nesterov momentum. This
invites a natural question: if a layer is an optimizer step, \emph{which}
optimizer should it implement, and how much does that choice matter? We
study a family of optimizer-inspired Transformers built from this
correspondence, and find that the choice matters substantially: stronger
momentum templates yield markedly better residual dynamics, and a
triple-momentum design, TMMFormer, is the strongest of the optimizers we try.

\textbf{Optimizer-inspired Transformer design.} Concretely, we treat the
optimizer template as an architectural axis: we hold the attention and MLP sublayers fixed and vary only the update rule of the residual stream.
Instantiating it with several classical optimizers yields a family of
optimizer-inspired Transformers---TMMFormer, AdamFormer, AdamWFormer,
MuonFormer, and SOAPFormer---which we pretrain under matched compute on
TinyStories and OpenWebText and evaluate for downstream transfer. TMMFormer
achieves the lowest validation loss in our main comparison, beating 
the vanilla residual stream and the prior YuriiFormer on both corpora. The advantage is architectural---robust to the parameter-training
optimizer and learning rate---and is consistent with the optimization
theory in which triple momentum is an optimal first-order method for
strongly convex quadratics
(Section~\ref{sec:optimizer-inspired-transformer}).

\textbf{Why momentum helps.} To explain the source of the gain,
we run a controlled momentum $\times$ preconditioning ablation. Adding a
momentum stream recovers most of the improvement, whereas diagonal
Adam-style preconditioning does not. The advantage
therefore comes from the momentum stream itself. A full-block Jacobian
analysis explains how: the auxiliary velocity turns the otherwise
first-order residual map into a second-order recurrence in depth---a filter
that changes how perturbations propagate through the residual stream. Across
variants, weaker minimum-gain persistence tracks validation loss, and a simple
local model shows the momentum recurrence is a strictly better forward filter
than the vanilla first-order stream (Section~\ref{sec:momentum}).

\textbf{Loss landscape and fine-tuning behavior.} To further explain the
gains of optimizer-inspired Transformers, we analyze their loss landscape and
fine-tuning behavior. The momentum variants converge to flatter minima than the
vanilla Transformer, and these flatter minima translate into less forgetting
after fine-tuning and better out-of-distribution generalization
(Section~\ref{sec:loss-landscape}).

% These dynamics leave a
% measurable trace on the loss surface: the optimizer-inspired variants---and
% the momentum ones most of all---settle into markedly flatter minima (lower
% Hessian trace and top eigenvalue) than the vanilla stream. Flatness in turn
% predicts behavior we care about: under sequential fine-tuning the flatter
% models forget less, and they generalize better to out-of-distribution
% corpora. A cheap warmup--stable--decay schedule, or SAM, flattens the minimum
% further, reinforcing the link. This gives a concrete reason why stronger
% momentum templates not only fit better but also transfer better
% (Section~\ref{sec:loss-landscape}).

% We find
% that it matters substantially---stronger momentum templates yield markedly
% better residual dynamics, and a triple-momentum design, TMMFormer, is the
% strongest of the optimizers we try.

\begin{contribbox}{Summary of Our Main Contribution}
\newcommand{\contribentry}[2]{%
  \par\noindent\hangindent=1.35em\hangafter=1%
  \makebox[1.35em][l]{#1.}#2\par}
\contribentry{1}{We study a family of optimizer-inspired Transformers and identify
the triple-momentum TMMFormer as the best validation-loss model in our main
comparison, beating both YuriiFormer and the vanilla Transformer.}
\contribentry{2}{With a controlled ablation and theory, we show that
momentum, not preconditioning, is the main source of the gain.}
\contribentry{3}{We show that the momentum variants reach flatter minima than
vanilla, with less forgetting and better out-of-distribution generalization.}
\end{contribbox}

%% file: body/related_work.tex
\section{Related Work}
\label{sec:related-work}

A line of work interprets deep learning architectures as discretizations of
continuous-time dynamics or as numerical schemes for evolving
representations. In the Transformer setting, \citet{lu2019understanding}
view the model as a numerical solver for a multi-particle dynamical
system and show that the standard pre-norm transformer block corresponds to a
first-order Lie--Trotter splitting between an inter-token interaction
term and a per-token potential term. This viewpoint also motivates
alternative splitting schemes such as the Strang--Marchuk-inspired
Macaron architecture, and provides early evidence that
numerical-analysis principles can guide Transformer design. Together,
this line of work argues that Transformer blocks are not immutable
design primitives, but can instead be derived from broader algorithmic
templates.

A second strand studies attention from a variational or
interacting-particle perspective.
\citet{geshkovski2025mathematical} provide a mathematical framework in
which Transformers are analyzed as interacting particle systems,
connecting attention dynamics to gradient flows and
long-time clustering over token configurations. This
viewpoint justifies treating attention not as a heuristic token-mixing
mechanism, but as implementing a structured gradient update on an
interaction energy. It provides the conceptual bridge to import
ideas of optimizer design into representation-space architecture rather
than only into parameter-space training.

The most directly related work is YuriiFormer
\citep{zimin2026yuriiformer}, which unifies attention and MLP updates
as gradient oracles for two complementary energies and reinterprets a
standard GPT-style block as gradient descent on the resulting composite
objective. Replacing this gradient-descent template with
Nesterov-accelerated momentum
\citep{nesterov269method,polyak1964some} yields a Transformer whose
residual stream is augmented by a velocity stream and whose forward
dynamics are second order in depth. Empirically, Nesterov momentum transformers improve validation loss  under matched parameter
and training budgets, with the improvements transferring to downstream
accuracy on reasoning tasks. YuriiFormer
thereby establishes both the conceptual validity and the practical
promise of optimizer-informed Transformer design.

% \paragraph{Optimizers beyond momentum, mostly studied in parameter space.}
% Modern training optimization has developed a much richer toolkit than
% plain gradient descent or Nesterov: adaptive moment methods such as
% Adam \citep{kingma2014adam} and its decoupled-weight-decay variant
% AdamW; matrix-orthogonalization-based methods such as Muon
% \citep{boreiko2025towards,bernstein2026muonpreconditioning}; and
% matrix-preconditioned methods such as Shampoo
% \citep{gupta2018shampoo} and SOAP \citep{vyas2024soap}. These methods
% estimate richer first- and second-moment or curvature structure than
% gradient descent and produce updates that are coordinate-, spectral-,
% or matrix-rescaled. However, they have been studied almost exclusively
% as \emph{parameter-space training optimizers}, not as
% \emph{representation-space architectural templates}.

\paragraph{Our extension.}
We extend the optimizer-informed Transformer design along two axes.
First, modern training optimization has developed a much richer toolkit than
plain gradient descent or Nesterov. We broaden the architectural template to cover
the Triple Momentum Method, AdamW, Muon, and SOAP, evaluated under
identical pre-norm backbones and matched budgets. Second, we investigate
\emph{which} aspect of an optimizer survives translation into the model architecture. Our experimental and theoretical results attribute the pretraining gains to the momentum stream, while diagonal and matrix preconditioning prove largely redundant with LayerNorm and attention.

%% file: body/preliminary.tex
\section{Preliminary}
\label{sec:preliminary}

\noindent\textbf{Notation.}
$X_\ell\in\mathbb{R}^{T\times d}$ is the residual stream at layer $\ell$ ($T$
tokens, model dimension $d$); $\ell+\tfrac12$ denotes the intermediate state
after the attention substep, so a layer factorizes as
$\ell\to\ell+\tfrac12\to\ell+1$ (attention then MLP). $\Attn_\ell$ and
$\MLP_\ell$ are the layer's two modules and $\LN(\cdot)$ the pre-norm
LayerNorm applied before each. Optimizer-inspired blocks also propagate one
or more auxiliary streams $\mathcal{S}_\ell$ (a velocity $V$, Adam moments
$M,S$, or a covariance $R$), stabilized by dedicated auxiliary LayerNorms
$\LN_v$ (velocity) and $\LN_u$ (update); auxiliary streams start from
separate learned token$+$position embeddings, except $S_0=\mathbf{1}$ and
$R_0=I_D$ for headwise channel covariances with head dimension $D=d/H$. Each
block carries a few learned per-layer scalars: we
suppress the layer index $\ell$ on them and use superscripts $(a)$ and $(m)$
for the attention- and MLP-substep copies, respectively
(e.g.\ $\mu^{(a)}$ vs.\ $\mu^{(m)}$). Scalars
constrained to $(0,1)$ are parameterized as $\sigma(\cdot)$ of an
unconstrained weight, positive scalars as $\softplus(\cdot)$.

\subsection{Optimization View of Transformers}
\label{sec:optim-transformer-v1}

Write $X_\ell=(x_1,\ldots,x_T)^\top$ for the residual stream at layer $\ell$,
with token rows $x_i\in\mathbb{R}^{d}$. We associate to $X$ a composite
surrogate energy
\begin{equation*}
    \mathcal{J}(X)
    \;=\;
    \mathcal{E}(X) + \mathcal{F}(X),
    % \label{eq:composite-v1}
\end{equation*}
where $\mathcal{E}(X) := \sum_{i,j} e^{\langle x_i, x_j \rangle}$ is a token--token \emph{interaction} energy and
$\mathcal{F}(X) := \sum_{i} U(x_i)$ is a per-token \emph{potential} energy. The two
sublayers of a Transformer block then act as learned negative-gradient oracles
\citep{geshkovski2025mathematical,zimin2026yuriiformer}:
\begin{align*}
    \Attn_\ell\!\bigl(X\bigr)
    &\;\approx\;
    -\nabla \mathcal{E}_\ell(X),\quad
    \mathrm{MLP}_\ell\!\bigl(X\bigr)
    \;\approx\;
    -\nabla \mathcal{F}_\ell(X).
\end{align*}
Any first-order descent method can be formulated as the template $x_{k+1}
    \;=\;
    \Phi\bigl(x_k,\, d_k\bigr)$, $d_k \approx -\nabla f(x_k).$
% \begin{equation*}
%     x_{k+1}
%     \;=\;
%     \Phi\bigl(x_k,\, d_k\bigr),
%     \qquad d_k \approx -\nabla f(x_k).
%     % \label{eq:opt-template-v1}
% \end{equation*}
The optimizer template can be lifted to a pre-norm Transformer block by identifying
$x_k \leftrightarrow X_\ell$, replacing the descent direction $d_k$ by the two
negative-gradient oracles $\Attn_\ell,\mathrm{MLP}_\ell$ applied to $\LN(X)$, and
discretizing the joint flow on $\mathcal{E}+\mathcal{F}$ with an
Lie--Trotter splitting so that the two oracles are
applied sequentially.
Concretely, a single Transformer layer becomes
\begin{align}
\begin{split}
\label{eq:template_opt}
        X_{\ell+1/2}
    &=
    \mathrm{Opt}_\ell
    \bigl(X_\ell,\; \mathcal{S}_\ell;\;
    \Attn_\ell(\LN(\cdot))\bigr),
    \\
    X_{\ell+1}
    &=
    \mathrm{Opt}_\ell
    \bigl(X_{\ell+1/2},\; \mathcal{S}_{\ell+1/2};\;
    \mathrm{MLP}_\ell(\LN(\cdot))\bigr),
\end{split}
\end{align}
where $\mathrm{Opt}_\ell$
is the optimizer step written in terms of a descent direction, $\mathcal{S}_\ell$
denotes any auxiliary state the optimizer maintains alongside the
iterate, such as moments. Different choices of
$\mathrm{Opt}_\ell$ thus give rise to different
\emph{optimizer-informed Transformers} sharing the same
$\Attn_\ell,\mathrm{MLP}_\ell$ backbone.
The vanilla pre-norm transformer block is the Lie--Trotter splitting of gradient
descent:
\begin{equation*}
\begin{aligned}
    X_{\ell+1/2}
    &= X_\ell + \Attn_\ell(\LN(X_\ell)),
    \\
    X_{\ell+1}
    &= X_{\ell+1/2}
       + \mathrm{MLP}_\ell(\LN(X_{\ell+1/2})).
\end{aligned}
% \label{eq:vanilla-block-v1}
\end{equation*}

\subsection{Optimizers Beyond Gradient Descent}
\label{sec:optimizers-v1}

We will instantiate the template~\eqref{eq:template_opt} with five
optimizer families that span the principal axes of modern first-order
optimization. We give only the
essential structure here. Full update rules are in
Appendix~\ref{sec:optimizer-rules}.

\paragraph{Heavy-Ball and Nesterov momentum.}
Polyak's heavy-ball method \citep{polyak1964some} augments gradient
descent with a velocity buffer $v_k$ that filters past gradients.
Nesterov's accelerated gradient \citep{nesterov269method} additionally
evaluates the gradient at a \emph{lookahead point}
$x_k + \mu_k v_k$ and, on the strongly convex quadratic model with curvature
range $[m,L]$, has contraction factor
$(\sqrt{L}-\sqrt{m})/(\sqrt{L}+\sqrt{m})$.
YuriiFormer~\citep{zimin2026yuriiformer} applies the Nesterov method to template~\eqref{eq:template_opt}, leading to the attention layer
\begin{align*}
    X^{\mathrm{in}}_\ell
    &= X_\ell + \mu\, V_\ell,
    \\
    V_{\ell+1/2}
    &= \LN_v\!\Bigl(
         \beta\, V_\ell + \gamma\,
         \Attn_\ell\!\bigl(\LN(X^{\mathrm{in}}_\ell)\bigr)
       \Bigr),
    \\
    X_{\ell+1/2}
    &= X_\ell + V_{\ell+1/2},
\end{align*}
where $\mu,\beta,\gamma$ are learned scalars and the auxiliary state is the
velocity buffer, $\mathcal{S}_\ell = V_\ell$.

\paragraph{Triple Momentum Method (TMM).}
TMM extends Nesterov with a second scalar $\nu_k$ that decouples the
gradient-evaluation lookahead from the iterate update,
$x_{k+1} = x_k + \nu_k v_{k+1}$. This gives a larger second-order update
family; in the local analysis below, setting $\nu_k\equiv1$ recovers the
YuriiFormer/Nesterov-style update.

\paragraph{Adam and AdamW.}
Adam \citep{kingma2014adam} maintains EMAs of the first and second
moments of the gradient, and rescales each coordinate to obtain a coordinate-wise adaptive step size.
AdamW decouples weight decay from the adaptive update. Both methods
produce updates that are \emph{element-wise} rescaled.

\paragraph{Muon.}
Muon \citep{boreiko2025towards,bernstein2026muonpreconditioning} treats a
matrix-valued parameter as a single object and performs steepest
descent under the spectral norm. It maintains a momentum
buffer and replaces it by its orthogonal polar factor through
Newton--Schulz iterations, producing updates whose singular values are
all approximately one. This makes the update \emph{spectrally
isotropic} rather than coordinate-wise.

\paragraph{Shampoo and SOAP.}
Shampoo \citep{gupta2018shampoo} maintains separate left and right
Gram matrix accumulators $L_t, R_t$ for a matrix gradient $G\in
\mathbb{R}^{m\times n}$ and applies a Kronecker-factored preconditioner
$L_t^{-1/4}\,G\,R_t^{-1/4}$. SOAP \citep{vyas2024soap} improves and
stabilizes Shampoo by performing Adam-style moment
updates in the eigenbasis of the Kronecker factors. Both combine matrix
preconditioning and adaptive updates.

%% file: body/optimizer_inspired_transformer.tex
\section{Optimizer Inspired Transformer}
\label{sec:optimizer-inspired-transformer}

In this section we describe the design of three optimizer-inspired
transformer variants, and then present our main evaluation results and ablation
studies. Building on the optimizer--as--architecture correspondence in
Section~\ref{sec:preliminary}, we instantiate TMMFormer, AdamFormer, and
MuonFormer, and study their relative behavior under matched compute. We also
build a SOAPFormer variant (right-factor Kronecker preconditioning), an
AdamWFormer variant that augments AdamFormer with decoupled weight decay, and
a factorial sweep of additional optimizer cells (Heavy-Ball, RMSProp,
orthogonal, Shampoo). To keep the main text focused, the update rules for
these six variants are deferred to Appendix~\ref{sec:extra-formers}.

\subsection{Optimizer Inspired Transformer Design}
\label{sec:transformer-design}

% \paragraph{TMMFormer (Triple Momentum Method).}
% A velocity stream $V_\ell\in\mathbb{R}^{T\times d}$ is propagated alongside
% the residual state $X_\ell$, with eight learned scalars per layer: a lookahead
% coefficient $\mu$, a velocity decay $\beta$, an oracle-output gain $\gamma$,
% and a velocity-reinjection gain $\nu$ (an attention and an MLP copy of each,
% superscripts $(a)$ and $(m)$).
\paragraph{TMMFormer (Triple Momentum Method).} TMMFormer propagates a velocity stream $V_\ell\in\mathbb{R}^{T\times d}$ alongside the residual state $X_\ell$. Each layer learns four scalars for each substep---lookahead $\mu$, velocity decay $\beta$, oracle gain $\gamma$, and reinjection gain $\nu$---with separate attention and MLP copies $(a)$ and $(m)$.

\begin{formerbox}{TMMFormer}
The attention substep
$\ell\to\ell+1/2$ is
\begin{align*}
    &\textstyle \widetilde{X}_\ell = X_\ell + \mu^{(a)}\,V_\ell,
\\
    &\textstyle V_{\ell+1/2} = \LN_v(
        \beta^{(a)}\,V_\ell
        + \gamma^{(a)}\,\Attn_\ell(\LN(\widetilde{X}_\ell))
    ),
\\
    &\textstyle X_{\ell+1/2} = X_\ell + \nu^{(a)}\,V_{\ell+1/2},
\end{align*}
and the MLP substep $\ell+1/2\to\ell+1$ is
\begin{align*}
    &\textstyle \widetilde{X}_{\ell+1/2} = X_{\ell+1/2} + \mu^{(m)}\,V_{\ell+1/2},
\\
    &\textstyle V_{\ell+1} = \LN_v(
        \beta^{(m)}\,V_{\ell+1/2}
        + \gamma^{(m)}\,\MLP_\ell(\LN(\widetilde{X}_{\ell+1/2}))
    ),
\\
    &\textstyle X_{\ell+1} = X_{\ell+1/2} + \nu^{(m)}\,V_{\ell+1}.
\end{align*}
\end{formerbox}

Each substep applies a lookahead $\widetilde{X}=X+\mu V$, a velocity EMA
(old velocity decayed by $\beta$, fresh oracle output scaled by $\gamma$,
renormalized by $\LN_v$), and an iterate update that moves $X$ along the new
velocity with gain $\nu$. Setting $\nu\equiv 1$ recovers
YuriiFormer~\citep{zimin2026yuriiformer}, which is our initialization for
$\nu$ (Appendix~\ref{sec:appendix-arch-constants}).

%\vspace{-8pt}
\paragraph{AdamFormer.}
Two auxiliary streams $(M_\ell,S_\ell)\in\mathbb{R}^{T\times d}\times
\mathbb{R}^{T\times d}_{>0}$ track per-token first and second moments of the
oracle output. Six learned scalars per layer: first- and second-moment decays
$\beta_1,\beta_2$ and an update gain $\gamma$ (an attention and an MLP copy of
each).

The oracle is queried at the current state $X$ (no lookahead). $M$ and $S$
are EMAs of the oracle output and of its element-wise square, and $X$ is
updated along the Adam direction $M/(\sqrt{S}+\varepsilon)$, renormalized by
$\LN_u$ and scaled by $\gamma$. AdamWFormer adds decoupled weight decay
(Appendix~\ref{sec:adamw-former}).

\begin{formerbox}{AdamFormer}
The attention substep is
\begin{align*}
    &\textstyle G_\ell = \Attn_\ell(\LN(X_\ell)),
\\
    &\textstyle M_{\ell+1/2} = \beta_1^{(a)}\,M_\ell + (1-\beta_1^{(a)})\,G_\ell,
\\
    &\textstyle S_{\ell+1/2} = \beta_2^{(a)}\,S_\ell
       + (1-\beta_2^{(a)})\,G_\ell\odot G_\ell,
\\
    &\textstyle X_{\ell+1/2} = X_\ell
       + \gamma^{(a)}\,\LN_u(
           \tfrac{M_{\ell+1/2}}
                {\sqrt{S_{\ell+1/2}}+\varepsilon}
       ),
\end{align*}
and the MLP substep is analogous with $\MLP_\ell$ replacing $\Attn_\ell$ on
input $X_{\ell+1/2}$:
\begin{align*}
    &\textstyle G_{\ell+1/2} = \MLP_\ell(\LN(X_{\ell+1/2})),
\\
    &\textstyle M_{\ell+1} = \beta_1^{(m)}\,M_{\ell+1/2}
       + (1-\beta_1^{(m)})\,G_{\ell+1/2},
\\
    &\textstyle S_{\ell+1} = \beta_2^{(m)}\,S_{\ell+1/2}
       + (1-\beta_2^{(m)})\,G_{\ell+1/2}\odot G_{\ell+1/2},
\\
    &\textstyle X_{\ell+1} = X_{\ell+1/2}
       + \gamma^{(m)}\,\LN_u(
           \tfrac{M_{\ell+1}}
                {\sqrt{S_{\ell+1}}+\varepsilon}
       ).
\end{align*}
\end{formerbox}

% \paragraph{MuonFormer (orthogonalized momentum).}
% A single momentum stream $M_\ell\in\mathbb{R}^{T\times d}$ is propagated; the
% update is orthogonalized by a per-token, head-wise Newton--Schulz operator
% $\NS(\cdot):\mathbb{R}^{T\times d}\to\mathbb{R}^{T\times d}$ before being added
% back into the residual (per-token reshape sizes and iteration count in
% Appendix~\ref{sec:appendix-arch-constants}). Four learned scalars per layer: a
% momentum decay $\beta$ and an update gain $\gamma$ (an attention and an MLP
% copy of each).

\paragraph{MuonFormer (orthogonalized momentum).}
MuonFormer propagates a momentum stream $M_\ell\in\mathbb{R}^{T\times d}$ and
orthogonalizes its update with a per-token, head-wise Newton--Schulz operator
$\NS(\cdot)$ before residual addition (reshape sizes and iterations in
Appendix~\ref{sec:appendix-arch-constants}). Each layer learns decay $\beta$
and gain $\gamma$ for both attention and MLP substeps.

\begin{formerbox}{MuonFormer}
The
attention substep is
\begin{align*}
    &\textstyle G_\ell = \Attn_\ell(\LN(X_\ell)),
\\
    &\textstyle M_{\ell+1/2} = \beta^{(a)}\,M_\ell + (1-\beta^{(a)})\,G_\ell,
\\
    &\textstyle X_{\ell+1/2} = X_\ell + \gamma^{(a)}\,\LN_u(
        \NS(M_{\ell+1/2})
    ),
\end{align*}
and the MLP substep is
\begin{align*}
    &\textstyle G_{\ell+1/2} = \MLP_\ell(\LN(X_{\ell+1/2})),
\\
    &\textstyle M_{\ell+1} = \beta^{(m)}\,M_{\ell+1/2}
       + (1-\beta^{(m)})\,G_{\ell+1/2},
\\
    &\textstyle X_{\ell+1} = X_{\ell+1/2} + \gamma^{(m)}\,\LN_u(
        \NS(M_{\ell+1})
    ).
\end{align*}
\end{formerbox}

$M$ is a momentum EMA of the oracle output. The update uses its orthogonal polar factor, computed per token and per head by the
Newton--Schulz operator $\NS$ (driving singular values to $1$)~\citep{boreiko2025towards}, which preserves causality for autoregressive training.

\subsection{Experimental Results}
\label{sec:optimizer-experiments}

\paragraph{Experimental Setup.}
\label{sec:experimental-setup}
All variants, including the vanilla baseline of
Section~\ref{sec:preliminary}, share an identical
$12$-layer pre-norm Transformer backbone with $12$ attention heads per layer
and model dimension $d=768$ (context $1024$, GPT-2 BPE, weight-tied head) and
differ \emph{only} in the optimizer template of
Section~\ref{sec:transformer-design}: vanilla has $124$M parameters and
every auxiliary-stream variant $\approx 163$M. We pretrain from scratch on
TinyStories (TS, $10$k steps) and OpenWebText (OWT, $30$k steps) with an
effective batch of $480$ sequences and a warmup-then-cosine schedule.
Parameters are trained with two coupled optimizers---Muon on the $2$D
weight matrices and AdamW on embeddings, LayerNorms, and the learned
per-layer scalars (at a higher learning rate)---so that the training
optimizer and the architectural template stay strictly separate. We report
best validation cross-entropy (nats/token) and downstream
\texttt{acc\_norm} on HellaSwag and ARC-Easy. Full hyperparameters are
listed in Appendix~\ref{sec:appendix-hparam-table}
(Tables~\ref{tab:hp-backbone}--\ref{tab:hp-system}).

\begin{tablebox}{Main results}
\setlength{\tabcolsep}{4.5pt}
\renewcommand{\arraystretch}{1.15}
\begin{tabular}{@{}lcccc@{}}
& \multicolumn{2}{c}{\textbf{val loss} $\downarrow$}
& \multicolumn{2}{c}{\textbf{acc\_norm} (\%) $\uparrow$} \\
\cmidrule(lr){2-3}\cmidrule(lr){4-5}
\textbf{Variant} & TS & OWT & HS & ARC \\
\midrule
VanillaTransformer & $1.1569$ & $3.0078$ & $30.20$ & $41.67$ \\
AdamFormer         & $1.1528$ & $2.9911$ & $30.96$ & $43.39$ \\
AdamWFormer        & $1.1472$ & $2.9883$ & $30.08$ & $41.88$ \\
YuriiFormer        & $1.1317$ & $2.9413$ & $31.58$ & $43.06$ \\
\textbf{TMMFormer} & $\mathbf{1.1284}$ & $\mathbf{2.9342}$
                   & $\mathbf{31.82}$  & $\mathbf{43.43}$ \\
\end{tabular}
\end{tablebox}
\captionof{table}{Best val loss (nats/token) and OWT downstream
\texttt{acc\_norm} (\%); full results in
Appendix~\ref{sec:detailed-results}.}
\label{tab:main-results}

\begin{figure}[t]
\centering
% \vspace{-20pt}
\includegraphics[width=\textwidth]{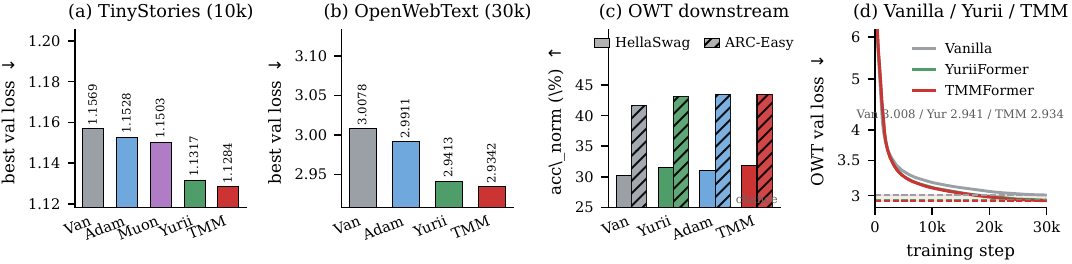}
% \vspace{-25pt}
\caption{Matched-compute comparison. \textbf{(a,b)} best val loss
(TinyStories, OpenWebText); \textbf{(c)} OWT downstream \texttt{acc\_norm};
\textbf{(d)} Vanilla, YuriiFormer, and TMMFormer OWT val-loss curves.}
\label{fig:results-overview}
% \vspace{-5pt}
\end{figure}

% \vspace{-12pt}
\paragraph{Main Results.}
\textbf{TMMFormer gives the best validation loss in our main comparison}
(Table~\ref{tab:main-results}, Figure~\ref{fig:results-overview}): it attains
the lowest validation loss on both corpora ($1.1284$ TS, $2.9342$ OWT) and the
best downstream accuracy on HellaSwag and ARC-Easy, improving on vanilla by
${\approx}0.029$ nats on TS and ${\approx}0.074$ on OWT. Among the rows
reported in Table~\ref{tab:main-results}, it is best in every column, and in
particular surpasses
YuriiFormer~\citep{zimin2026yuriiformer}, the prior momentum-stream
architecture that it generalizes (the $\nu\equiv1$ special case). This is
consistent with the optimization view of Section~\ref{sec:preliminary}: on the
quadratic local model used to analyze the depth recurrence, triple momentum
has a stronger first-order rate than the Nesterov and gradient-descent
templates. TMMFormer also does not materially increase runtime: it and the
vanilla transformer both train at ${\approx}3$\,s/step on OpenWebText (no
significant difference), with full per-variant wall-clock in
Appendix~\ref{sec:appendix-wall-clock}. Among the preconditioned variants,
MuonFormer and SOAPFormer
converge but trail every momentum-stream design---best TinyStories validation
loss $1.1503$ (Muon) and $1.1431$ (SOAP), and on OpenWebText MuonFormer
reaches only $3.0096$, no better than the vanilla stream---consistent with
preconditioning being a weak inductive bias for the residual stream
(Section~\ref{sec:momentum}). Full numbers, training curves, and analysis are
in Appendix~\ref{sec:detailed-results}.

\paragraph{Ablation Study.}
To confirm the advantage is architectural, we run four ablations.

$\bullet$ \emph{Peak learning rate.} We halve the Muon peak learning rate
    ($4{\times}10^{-3}\!\to\!2{\times}10^{-3}$) with the rest of the recipe
    held fixed. On OpenWebText, Vanilla degrades from $3.008$ to $3.029$
    ($+0.021$ nats) and TMMFormer from $2.934$ to $2.963$ ($+0.029$); the
    architectural gap shrinks only ${\approx}11\%$ (from $0.074$ to $0.066$),
    so the ordering is robust to LR perturbation.
    
$\bullet$ \emph{Parameter-training optimizer.} We swap the optimizer on the
    2D weight matrices (default Muon hybrid $\to$ a single pure AdamW for
    every parameter group), forming a full $2{\times}2$ with the two
    architectures. The pure-AdamW peak LR is the standard
    nanoGPT~/~GPT-2-small default ($6{\times}10^{-4}$), not retuned per
    architecture, so the comparison uses a community-accepted recipe for both
    sides. On OpenWebText, Vanilla barely moves ($3.008\!\to\!3.010$, $+0.002$
    nats), while TMMFormer moves more ($2.934\!\to\!2.970$, $+0.035$);
    TMMFormer still beats Vanilla under both optimizers (gap $0.074$ under
    Muon hybrid, $0.041$ under pure AdamW).
    
$\bullet$ \emph{Parameter-matched controls.} We train a Vanilla variant with the
    same parameter count as TMMFormer ($\approx163$M, achieved by widening
    $d_{\mathrm{model}}$ to $900$). On TinyStories it closes only about $40\%$ of
    the TMM--Vanilla gap (best val $1.1454$, vs.\ TMMFormer's $1.1272$ and the
    default Vanilla's $1.1578$); the remaining ${\approx}0.018$ nats is roughly
    $9\times$ the per-seed standard deviation, so the bulk of TMMFormer's gain is
    not a parameter-count effect.
    
$\bullet$ \emph{Multi-seed validation.} Separate three-seed TinyStories runs
    ($10$k steps) give Vanilla $1.1578\pm0.0028$ and TMMFormer
    $1.1272\pm0.0013$; these mean/std values need not match the single-run
    values in Table~\ref{tab:main-results}. The gap of $0.0306$ is
    ${\approx}15\times$ the pooled seed standard deviation, so it is not seed
    luck. We run this on TinyStories because the smaller
    corpus has both a tighter architectural gap and less data, so seed noise
    could plausibly be larger; since the noise on TS is already very small,
    OWT (with a larger gap) should be at least as stable.
    Across all four ablations the noise is small (a few thousandths of a nat) and
    the architectural effect is the dominant signal; full per-ablation numbers
    are in Appendix~\ref{sec:detailed-results}.

%% file: body/momentum.tex
\section{Momentum}
\label{sec:momentum}

The previous section suggests that momentum is the useful ingredient:
YuriiFormer~\citep{zimin2026yuriiformer} and TMMFormer outperform the other
optimizer-inspired variants, but they also include lookahead or learned
velocity reinjection. To isolate momentum from preconditioning, we introduce
three controlled variants: HBFormer keeps only a heavy-ball velocity stream,
RMSPropFormer keeps only diagonal second-moment preconditioning, and
OrthoFormer keeps only the Muon-style spectral preconditioner. Together with
Vanilla, AdamFormer, and MuonFormer, these variants form a
momentum$\times$preconditioning ablation.

\begin{tablebox}{Momentum $\times$ preconditioning (OWT val loss)}
\setlength{\tabcolsep}{6pt}
\renewcommand{\arraystretch}{1.15}
\begin{tabular}{lccc}
 & No precond. & Diag. precond. & Spectral precond. \\
\hline
No momentum & Vanilla: $3.008$ & RMSProp: $3.015$ & Ortho: $3.033$ \\
Momentum & HB: $2.945$ & AdamFormer: $2.991$ & Muon: $3.010$ \\
\end{tabular}
\end{tablebox}
\captionof{table}{Momentum--preconditioning ablation.
AdamFormer isolates diagonal preconditioning without AdamW weight
decay; OrthoFormer is MuonFormer without momentum.}
\label{tab:momentum-two-by-two}

\subsection{Momentum Is Sufficient for Most of the Gain}
\label{sec:momentum-sufficient}

The cleanest evidence comes from the no-preconditioning column of
Table~\ref{tab:momentum-two-by-two}. Adding only a heavy-ball momentum stream
improves OpenWebText validation loss from $3.008$ to $2.945$. This comparison
does not add Adam-style second moments, orthogonalization, or matrix
preconditioning, so it isolates the effect of momentum itself.

The broader no-preconditioning ordering is then:
$3.008\;(\text{Vanilla})
    >
    2.945\;(\text{HB}) 
    >
    2.941\;(\text{Yurii})
    >
    2.934\;(\text{TMM}),$
% \begin{align*}
%    \textstyle  3.008\;(\text{Vanilla})
%     &>
%     2.945\;(\text{HB}) \notag\\
%     &>
%     2.941\;(\text{Yurii})
%     >
%     2.934\;(\text{TMM}),
% \end{align*}
where lower validation loss is better. A minimal heavy-ball stream already
recovers most of the vanilla-to-Yurii gap; lookahead and learned reinjection
then add smaller gains, with TMMFormer lowest in this comparison.

\subsection{Preconditioning Does Not Explain the Improvement}
\label{sec:preconditioning-not-source}

The same ablation shows that preconditioning---diagonal or spectral---does not
explain the gains in this setting. Without momentum, RMSPropFormer is slightly
worse than Vanilla ($3.015$ versus $3.008$) and OrthoFormer worse still
($3.033$). With momentum, AdamFormer ($2.991$) and MuonFormer ($3.010$) are
both worse than the plain heavy-ball HBFormer ($2.945$), so in these runs the
preconditioned momentum variants underperform the unpreconditioned momentum
variant. The momentum effect, by contrast, holds in every column---each
momentum row beats its no-momentum counterpart. Thus the main benefit comes
from momentum rather than from adaptive or spectral rescaling of token-space
update directions.

For diagonal preconditioning, Appendix~\ref{sec:preconditioning-redundancy}
gives a supporting explanation, summarized informally below.
\begin{tcolorbox}[thmblue, title={\textbf{Diagonal preconditioning is redundant}}]
\begin{theorem}
\label{thm:informal-diagonal-redundancy}
If the Adam- or RMSProp-style second-moment stream is approximately
coordinate-balanced, then its diagonal preconditioner is close to a scalar
multiple of the identity. Because AdamFormer and RMSPropFormer apply an update
LayerNorm after this preconditioned update, the scalar component is absorbed
and cannot create a new residual direction.
\end{theorem}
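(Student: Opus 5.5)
I would make the informal statement precise per token and then prove it by a perturbation bound on LayerNorm. Fix a token row and write the preconditioned direction as $u = p \odot m$. Here $m\in\mathbb{R}^d$ is the first-moment (or, for RMSProp, raw oracle) vector, and $p_j = 1/(\sqrt{s_j}+\varepsilon)$ is read off the second-moment stream $s$.

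The hypothesis of \emph{approximate coordinate balance} I would formalize as
\[
\max_j |s_j - \bar s| \le \delta\,\bar s, \qquad \bar s = \tfrac1d\textstyle\sum_j s_j .
\]
From this, an elementary estimate on $t\mapsto 1/(\sqrt t+\varepsilon)$ gives $p = c\,(\mathbf{1}+r)$ with $c = 1/(\sqrt{\bar s}+\varepsilon)$ and $\|r\|_\infty \le C\delta$. Hence $\diag(p) = c\,(I + \diag(r))$ is a scalar multiple of the identity up to a relative error $O(\delta)$. This gives the first half of the statement.

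For the second half, I would use the exact scale invariance of LayerNorm. With $\LN_u(z) = g\odot \frac{Pz}{\|Pz\|/\sqrt d} + b$, where $P = I - \frac1d\mathbf{1}\mathbf{1}^\top$ is the centering projection (taking the $\epsilon$-free form first), we have $\LN_u(cz) = \LN_u(z)$ for every $c>0$. Therefore
\[
\LN_u(p\odot m) = \LN_u\bigl(m + r\odot m\bigr).
\]
The scalar $c$ disappears entirely. In particular, when $\delta = 0$ the AdamFormer/RMSPropFormer update $\gamma\,\LN_u(M/(\sqrt S+\varepsilon))$ coincides exactly with $\gamma\,\LN_u(M)$, which is the un-preconditioned (heavy-ball or vanilla-type) update. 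So the preconditioner contributes no new residual direction.

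For $\delta>0$ I would bound $\|\LN_u(m + r\odot m) - \LN_u(m)\|$ using the Lipschitz constant of $z\mapsto Pz/\|Pz\|$ away from the origin. That constant is $2/\|Pm\|$ on the relevant segment, which gives
\[
\bigl\|\LN_u(p\odot m)-\LN_u(m)\bigr\| \le 2\sqrt d\,\|g\|_\infty\,\frac{\|r\odot m\|}{\|Pm\| - \|r\odot m\|} \le \frac{2\sqrt d\,\|g\|_\infty C\delta\,\|m\|}{\|Pm\|(1-O(\delta))}.
\]
This is $O(\delta)$ as long as $m$ is not nearly constant across coordinates, i.e.\ $\|Pm\|\gtrsim \|m\|$. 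Finally, I would note that with LayerNorm's stabilizer $\epsilon_{\LN}>0$ the invariance holds only approximately, adding an error of order $\epsilon_{\LN}/\|Pm\|^2$.

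The main obstacle is the nondegeneracy condition $\|Pm\|\gtrsim\|m\|$. The centering projection can annihilate $m$, and near that set the normalization map is not Lipschitz, so the bound has to carry this as an explicit assumption. I would state it as part of the precise version, on the grounds that the Adam-direction $M/\sqrt{S}$ whose mean dominates is precisely the regime where LayerNorm is ill-conditioned anyway. A secondary concern is the $\varepsilon$ in $\sqrt{S}+\varepsilon$. It only helps: it makes $p$ flatter, and the bound on $r$ already covers it since the balance estimate above uses $\sqrt{\bar s}+\varepsilon$ throughout.
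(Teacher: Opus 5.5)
Your proposal is correct and uses the same core argument as the paper: first show that $D_s$ is a scalar multiple of the identity up to a small relative error, then use positive-scale invariance of $\LN_u$ to discard the scalar. The differences lie in how each half is quantified. Note the notation: in the paper, $\epsilon$ is the balance parameter and $\delta$ the Adam floor.

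For the first half, the paper assumes $\rho_{\max}^2/\rho_{\min}^2\le 1+\epsilon$ and normalizes by $\alpha=d_{\max}=1/(\rho_{\min}+\delta)$. This puts every entry of $D_s/\alpha$ in $(0,1]$ and gives the explicit bound $\|D_s/\alpha-I\|_2\le(\sqrt{1+\epsilon}-1)/(1+\delta/\rho_{\max})\le\epsilon/2$ in a few lines. Your hypothesis $\max_j|s_j-\bar s|\le\delta\bar s$ is equivalent up to constants. Centering at $\bar s$, however, leaves a two-sided $r$ and an unspecified $C$, so the paper's normalization is the cleaner choice here.

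For the second half, you go further than the paper. The paper stops at the operator-level remark that whatever survives $\LN_u$ must come from the non-scalar part of $D_s$; it never bounds how far that part moves the normalized update. Your Lipschitz estimate supplies that bound, and in doing so exposes a hypothesis the paper leaves implicit. The non-scalar part is small as an operator, but its effect on the normalized direction is small only when $\|Pm\|\gtrsim\|m\|$. For example, with $m=\mathbf{1}$ the unpreconditioned $\LN_u(m)$ is not even defined, while $\LN_u(p\odot m)=\LN_u(\mathbf{1}+r)$ is determined entirely by $r$, however small your $\delta$ is.

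Two small refinements:
\begin{itemize}
\item Your displayed estimate double-counts slack. Either $2\|r\odot m\|/\|Pm\|$ or $\|r\odot m\|/(\|Pm\|-\|r\odot m\|)$ suffices on its own.
\item The $\epsilon_{\LN}$ correction should be measured against the actual input norm $c\,\|P(m+r\odot m)\|$ rather than $\|Pm\|$. That is exactly where the scalar $c$ can leak back in.
\end{itemize}

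In short, the paper's version gives a sharp statement about the preconditioner itself under only the balance assumption. Yours gives a statement about the resulting update direction, at the price of an explicit nondegeneracy condition.
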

\end{tcolorbox}
Thus, in the diagonal case, any useful preconditioning effect must come from
non-scalar deviations that survive the update LayerNorm.

\subsection{What Does Momentum Change?}
\label{sec:what-momentum-changes}

\begin{figure}[t]
\centering
% \vspace{-15pt}
\includegraphics[width=\textwidth]{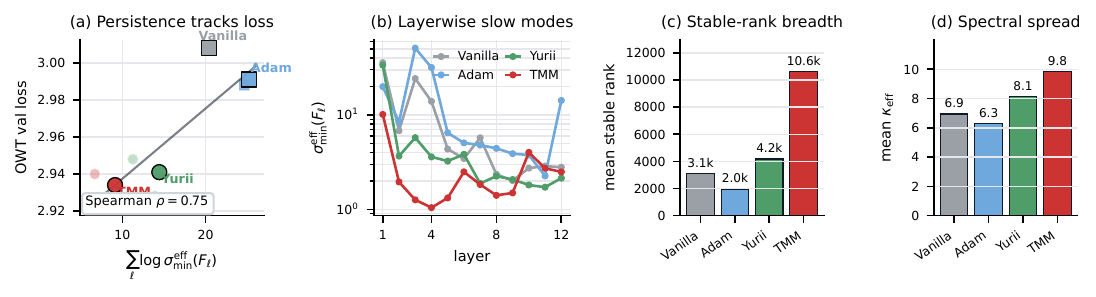}
\vspace{-20pt}
\caption{Full-block Jacobian spectra: (a) minimum-gain persistence versus loss,
(b) layerwise minimum-gain persistence, (c) stable rank, and (d) spectral
spread.}
\label{fig:momentum-landscape}
% \vspace{-5pt}
\end{figure}

We next ask what momentum changes inside the forward computation. Our main
finding is that momentum changes how perturbations propagate across depth:
momentum variants have lower minimum-gain persistence than Vanilla while
preserving a broad transition spectrum. This is a forward-propagation
diagnostic, not an optimization condition-number claim. To measure it, we
analyze the full layer transition, not the raw attention/MLP oracle, because
momentum blocks also include auxiliary streams, update LayerNorms, and learned
scalar gates. The diagnostic below summarizes the full-block Jacobian along
trained trajectories.

\begin{methodbox}{Full-Block Jacobian Diagnostic}
For each layer, let $F_\ell:X_\ell\mapsto X_{\ell+1}$ be the complete block
map, including auxiliary streams, update LayerNorms, and learned scalar gates,
and let
\begin{equation*}
J_\ell=\partial F_\ell/\partial X_\ell\,\big|_{X_\ell=\bar X_\ell}
\end{equation*}
be its Jacobian at trained activations, with auxiliary streams fixed to their
trajectory values. We write $\sigma_{\max}^{\mathrm{eff}}(J_\ell)$ and
$\sigma_{\min}^{\mathrm{eff}}(J_\ell)$ for the largest and smallest singular
values kept by the same fixed numerical cutoff across all variants. From these
singular values, we report:
\begin{align*}
\text{min-gain persist.:}\quad
\mathcal{P}
&=\textstyle\sum_{\ell}\log\sigma_{\min}^{\mathrm{eff}}(J_\ell),
\\
\text{stable rank:}\quad
r_{\mathrm{st}}(J_\ell)
&=\|J_\ell\|_F^2/\|J_\ell\|_2^2,
\\
\text{spread:}\quad
\kappa_{\mathrm{eff}}(J_\ell)
&=\sigma_{\max}^{\mathrm{eff}}(J_\ell)/
   \sigma_{\min}^{\mathrm{eff}}(J_\ell).
\end{align*}
\emph{Minimum-gain persistence} sums the weakest layerwise gain across depth;
lower values mean less amplification along the most contracted measured
directions. \emph{Stable rank} measures how broadly the Jacobian spectrum is
used; higher values mean a less concentrated transition. \emph{Spread} is the
ratio between the largest and smallest effective gains; higher values mean a
wider range of forward amplification, not a worse optimizer condition number.
\end{methodbox}

Figure~\ref{fig:momentum-landscape} summarizes the result. Across the analyzed
OpenWebText variants, lower minimum-gain persistence tracks lower validation
loss: the momentum variants move to the low-persistence, low-loss region,
whereas AdamFormer does not. At the same time, the momentum variants have
higher stable rank and larger spectral spread, indicating that they do not
simply collapse the transition spectrum. Instead, momentum reshapes the
full-block Jacobian into a broader forward filter while reducing the
persistence of the most contracted measured directions. Together, these results
suggest that momentum improves the residual stream by changing how perturbations
propagate across layers, rather than merely changing the scale of individual
updates.

Appendix~\ref{sec:momentum-residual-filtering} gives complementary theoretical
support for this mechanism in a simplified local model; we summarize the
implication informally here.
\begin{tcolorbox}[thmblue, title={\textbf{Momentum yields second-order filtering}}]
\begin{theorem}
\label{thm:informal-momentum-filter}
In a local linearized residual-stream model around a task-relevant hidden
representation, a vanilla residual Transformer implements a first-order
polynomial filter over token-feature modes, whereas a momentum-stream
Transformer implements a second-order filter. For a nontrivial local spectrum
with condition number $\kappa>1$, there exist stable momentum coefficients
whose worst-case contraction factor is
$(\sqrt{\kappa}-1)/(\sqrt{\kappa}+1)$, strictly smaller than the best
fixed-step first-order factor $(\kappa-1)/(\kappa+1)$.
\end{theorem}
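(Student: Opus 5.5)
The proof proceeds in three steps: set up the local linear model, identify the two filter classes, and then optimize each class over a spectral interval.

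\emph{Local model.} Linearize the oracles around a task-relevant representation $\bar X$. Take $\Attn_\ell(\LN(X))+\MLP_\ell(\LN(X))\approx -H(X-\bar X)$ with $H$ symmetric and spectrum in $[m,L]$, so that $\kappa=L/m>1$. Treat $\LN$ and $\LN_v$ as locally fixed positive rescalings and absorb them into the step coefficients, and merge the Lie--Trotter attention and MLP substeps into a single step on $\mathcal{E}+\mathcal{F}$; the coefficients are frozen across layers. Writing $e_\ell=X_\ell-\bar X$ and diagonalizing $H$ decouples the recurrence into scalar token-feature modes indexed by eigenvalues $\lambda\in[m,L]$.

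\emph{Filter classes.} The vanilla block gives $e_{\ell+1}=(1-\alpha\lambda)e_\ell$, a degree-one polynomial filter in $\lambda$ for each mode. For the momentum block (TMM/YuriiFormer with lookahead $\mu$, decay $\beta$, gain $\gamma$, reinjection $\nu$), eliminate $V$ to get a two-term recurrence
\[
e_{\ell+1}=(1+\beta-a\lambda)e_\ell-(\beta-b\lambda)e_{\ell-1},
\]
where $a,b$ are explicit in $\gamma\nu$ and $\mu$. Each mode therefore evolves by a $2\times2$ companion matrix whose characteristic polynomial is $z^2-(1+\beta-a\lambda)z+(\beta-b\lambda)$, which is the second-order filter. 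For the contraction claim it is enough to use the heavy-ball subfamily $\mu=0$: there $b=0$ and $a=\gamma\nu$ with $\nu$ free, so every Polyak choice of $(a,\beta)$ is realizable.

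\emph{Rates.} For the first-order filter, $\min_\alpha\max_{\lambda\in[m,L]}|1-\alpha\lambda|=(\kappa-1)/(\kappa+1)$. This is attained at $\alpha=2/(m+L)$. A lower bound follows by equioscillation: the two endpoint errors $1-\alpha m$ and $\alpha L-1$ cannot both fall below that value. For the second-order filter, choose the Polyak coefficients $a=4/(\sqrt L+\sqrt m)^2$ and $\beta=\bigl((\sqrt\kappa-1)/(\sqrt\kappa+1)\bigr)^2$. The discriminant $(1+\beta-a\lambda)^2-4\beta$ is then $\le 0$ on all of $[m,L]$, because the endpoints give exactly $1+\beta-a\lambda=\pm2\sqrt\beta$. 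Hence both roots have modulus $\sqrt\beta=(\sqrt\kappa-1)/(\sqrt\kappa+1)$ uniformly in $\lambda$, and the coefficients are stable since $\sqrt\beta<1$. Strict inequality follows from $(\sqrt\kappa-1)/(\sqrt\kappa+1)<(\kappa-1)/(\kappa+1)$, which is equivalent to $\sqrt\kappa<\kappa$ after cross-multiplying and holds for $\kappa>1$.

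\emph{Main obstacle.} The difficulty lies in the notion of "worst-case contraction factor" for a nonnormal companion matrix. At the endpoint modes the roots coincide, giving a Jordan block, so $\|M_\lambda^\ell\|\sim \ell\rho^\ell$ rather than $\rho^\ell$. I would state the factor as the asymptotic rate $\limsup_\ell\|M_\lambda^\ell\|^{1/\ell}=\max_\lambda\rho(M_\lambda)$, which absorbs the polynomial factor and is the standard convention behind the quoted Polyak/Nesterov rates. A second point to handle is justifying that the $\LN_v$ rescaling is locally a positive scalar, so it can be absorbed into $\beta$ and $\gamma$. I would state this as an explicit modelling assumption of the "simplified local model", consistent with the informal phrasing of the theorem.
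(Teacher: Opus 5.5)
Your proposal is correct and follows the paper's own argument (Lemmas~\ref{lem:best-vanilla-contraction} and~\ref{lem:momentum-contraction}) essentially step for step. Both use the same linearized sandbox, eliminate the velocity to get the two-term recurrence, take the endpoint-equalizing vanilla step $2/(L+m)$, and use the classical Polyak heavy-ball coefficients with rate $\sqrt{\beta}$ in the zero-lookahead subfamily. Your explicit discriminant check and your definition of the factor as the asymptotic rate $\max_\lambda\rho(M_\lambda)$ are more careful than the paper, which cites ``Chebyshev semi-iterative analysis'' and asserts an $N$-independent constant $C_{\mathrm{mom}}$; as your Jordan-block remark shows, that constant is too optimistic when $m$ or $L$ is itself an eigenvalue of $H$.
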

\end{tcolorbox}

This result is local, not a global language-modeling guarantee, but it explains
why an auxiliary velocity gives a richer finite-depth filter than the
vanilla first-order stream. Appendix \ref{sec:momentum-residual-filtering} also shows that TMMFormer
contains YuriiFormer as the $\nu_\ell=1$ special case, supporting the empirical
ordering Vanilla $<$ HBFormer $<$ YuriiFormer $<$ TMMFormer.

%% file: body/properties_and_discussion.tex
\section{Loss Landscape of Optimizer Based Transformer}
\label{sec:loss-landscape}
In this section, we study the loss-landscape sharpness of the
optimizer-inspired transformers introduced in
Section~\ref{sec:optimizer-inspired-transformer}. We first measure the loss
landscape of the vanilla transformer and of the optimizer-inspired variants,
and show that the latter exhibit a flatter landscape than the vanilla
transformer. We then measure the forgetting and generalization behavior of
the momentum variants, and show that they generalize better and forget less
than the vanilla transformer. Finally, we
show that adding a learning-rate schedule further improves TMMFormer, while
additionally mitigating forgetting and improving generalization. Together,
these results indicate that a flatter loss landscape is one of the reasons the
optimizer-inspired transformers outperform the vanilla transformer.

\begin{figure}[t]
\centering
% \vspace{-15pt}
\includegraphics[width=\textwidth]{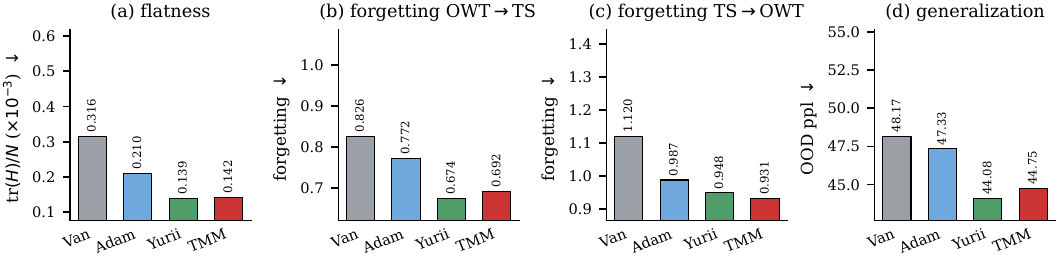}
\vspace{-10pt}
\caption{Flatness, forgetting, and generalization (lower is better).
\textbf{(a)} $\operatorname{tr}(H)/N$ ($\times10^{-3}$); \textbf{(b,c)}
forgetting (OWT$\to$TS, TS$\to$OWT); \textbf{(d)} out-of-distribution
perplexity.}
\label{fig:properties-overview}
\end{figure}

\subsection{Loss Landscape}

\paragraph{Setup.}
For each variant, we probe the curvature of the validation cross-entropy loss
around the trained parameters $\theta$ at its best checkpoint, using three
standard sharpness diagnostics: the top Hessian eigenvalue $\lambda_{\max}$
(via power iteration on Hessian--vector products), the Hessian trace
$\operatorname{tr}(H)$ (via the Hutchinson estimator), and the loss range
along a filter-normalized one-dimensional perturbation~\citep{li2018visualizing},
together with the scale-invariant trace $\operatorname{tr}(H)/N$. Every variant
is probed on the same fixed validation batches, and a lower value of each
diagnostic indicates a flatter minimum. Full estimator definitions and
hyperparameters are given in Appendix~\ref{sec:landscape-measurement}.

\paragraph{Results.}
Figure~\ref{fig:properties-overview}(a) reports the parameter-normalized
Hessian trace $\operatorname{tr}(H)/N$ ($\times10^{-3}$). The vanilla
transformer sits in the sharpest minimum ($0.316$). AdamFormer already
flattens it substantially ($0.210$), and the momentum variants reach the
flattest minima by a wide margin---$0.139$ for YuriiFormer and $0.142$ for
TMMFormer, roughly $2.2\times$ flatter than vanilla and about a third lower
than AdamFormer. YuriiFormer is slightly flatter than TMMFormer by this
diagnostic, even though TMMFormer has the lower validation loss. Thus the
landscape measurements should be read as supporting the broader momentum
effect---optimizer-inspired gains are accompanied by substantially flatter
minima than vanilla---rather than as a strict ordering among the two best
momentum variants.

\subsection{Forgetting and Generalization}
The flatter minima found above are classically linked to better
generalization and greater robustness to distribution
shift~\citep{foret2020sharpness}. We test whether this link holds for the
optimizer-inspired variants by measuring how much each one forgets under
sequential fine-tuning and how well it generalizes out of distribution.
\paragraph{Setup.}
We probe two complementary axes. For \emph{forgetting}, we fine-tune each
pretrained checkpoint on a second corpus and measure how much it degrades on
its original one: from an OpenWebText (resp.\ TinyStories) model we fine-tune
on TinyStories (resp.\ OpenWebText) and report \emph{forgetting}, the rise in
the original corpus's loss. To isolate the architecture, every variant is fine-tuned with
the \emph{same} AdamW optimizer and schedule, regardless of its pretraining
optimizer. For \emph{generalization}, we evaluate the OpenWebText checkpoint
zero-shot (no fine-tuning) on three out-of-distribution corpora---WikiText-103,
LAMBADA, and C4---and report the average perplexity. Lower forgetting and
lower out-of-distribution perplexity are better. Full fine-tuning
hyperparameters, the evaluation protocol, and corpus details are given in
Appendix~\ref{sec:forgetting-generalization-measurement}.

\paragraph{Results.}
Both axes follow the broad flatness pattern of
Figure~\ref{fig:properties-overview}(b--d). Forgetting decreases
monotonically from the vanilla transformer to the momentum variants in
\emph{both} transfer directions: for OpenWebText$\to$TinyStories it drops
from $0.83$ (vanilla) to $0.77$ (AdamFormer) to $0.69$ (TMMFormer) and
$0.67$ (YuriiFormer), and TinyStories$\to$OpenWebText shows the same ordering
($1.12\to0.99\to0.93$/$0.95$). Out-of-distribution generalization shows the
same broader pattern: the
average perplexity over WikiText-103, LAMBADA, and C4 falls from $48.2$ for
vanilla and $47.3$ for AdamFormer to $44.8$ for TMMFormer and $44.1$ for
YuriiFormer. YuriiFormer is slightly better on these robustness diagnostics,
whereas TMMFormer is better on validation loss. The main conclusion is that
the momentum variants sit in flatter minima, forget less, and transfer better
than vanilla. Per-corpus numbers are reported in
Appendix~\ref{sec:forgetting-generalization-measurement}.

\subsection{Learning-Rate Schedule and Sharpness-Aware Minimization}

We test two low-cost training interventions on TMMFormer, changing only the
parameter-training recipe (the architecture is unchanged): a
warmup--stable--decay (WSD) learning-rate schedule in place of the default
warmup--cosine, and Sharpness-Aware Minimization~\citep{foret2020sharpness}
(SAM), which takes an extra ascent step toward the worst-case loss in a
$\rho$-ball before each update. We also combine them (\emph{SAWD}: the WSD
schedule with SAM applied only during the decay phase). Schedule and
SAM details are in Appendix~\ref{sec:sam-wsd-measurement}.

Both interventions flatten the TMMFormer minimum. WSD lowers
$\operatorname{tr}(H)/N$ from $0.142$ to $0.106\times10^{-3}$ and improves OWT validation loss from $2.934$ to $2.924$. SAM gives a faltter minimum ($0.062\times10^{-3}$, ${\approx}2.3\times$ below cosine) but does not improve validation loss ($2.934\!\to\!2.940$).

%% file: body/conclusion.tex
\section{Conclusion}
Viewing a pre-norm Transformer layer as one step of a first-order optimizer
on a surrogate token energy turns the choice of optimizer into an
architectural design axis. Among the resulting optimizer-inspired
Transformers, TMMFormer---the triple-momentum template---achieves the lowest
validation loss in our main comparison, beating the vanilla stream and the
prior YuriiFormer on both corpora; this gain is robust to the training
optimizer and learning rate, and is consistent with the local theory of
triple-momentum depth recurrences. Controlled experiments further show that
the improvement comes from the optimizer's momentum design, while its
preconditioning design does not explain the gain. Finally, analyzing the loss
landscape and fine-tuning behavior of optimizer-inspired Transformers, we
find that the momentum variants reach flatter minima than vanilla, which in
turn yields less forgetting and better generalization; YuriiFormer is
slightly better than TMMFormer on several of these robustness diagnostics, so
these results should be interpreted as evidence for the broader momentum
effect rather than a strict win for TMMFormer on every metric.

% We further shows that optimizer based transformer yielding flatter minima that forget less and generalize
% better. Our models are ${\sim}160$M parameters at modest budgets and the
% filtering analysis is local rather than a global guarantee; scaling and
% tightening the theory are natural next steps.

%% file: appendix/optimizer_rules.tex
\section{Optimizer Update Rules}
\label{sec:optimizer-rules}

This appendix collects the parameter-space update rules for the
optimizer templates summarised in Section~\ref{sec:preliminary}. Each
template lifts to a Transformer block by the recipe of that section:
replace $\nabla f$ by the two oracles $\Attn_\ell(\LN(\cdot))$ and
$\mathrm{MLP}_\ell(\LN(\cdot))$, maintain the optimizer's auxiliary
state as a parallel residual stream, and learn the per-layer scalars.

\subsection{Heavy-Ball and Nesterov Accelerated Gradient}
\label{sec:hb-nag}

Polyak's heavy-ball method \citep{polyak1964some} adds a momentum
buffer $v_k$ to gradient descent. Nesterov's accelerated gradient
(NAG) \citep{nesterov269method} evaluates the gradient at a lookahead
point. With state $x_k$, velocity $v_k$, lookahead $\mu_k$, momentum
$\beta_k\in(0,1)$, and step size $\gamma_k>0$:
\begin{align*}
    \tilde{x}_k
    &= x_k + \mu_k\, v_k,
    \\
    v_{k+1}
    &= \beta_k\, v_k - \gamma_k\, \nabla f(\tilde{x}_k),
    \\
    x_{k+1}
    &= x_k + v_{k+1}.
\end{align*}
Heavy-ball is the special case $\mu_k=0$. For the strongly convex quadratic
model with curvature range $[m,L]$, NAG has contraction factor
$(\sqrt{L}-\sqrt{m})/(\sqrt{L}+\sqrt{m})$.

\subsection{Triple Momentum Method}
\label{sec:tmm}

The Triple Momentum Method (TMM) introduces a second scalar $\nu_k$
that decouples the iterate update from the gradient lookahead:
\begin{align*}
    \tilde{x}_k
    &= x_k + \mu_k\, v_k,
    \\
    v_{k+1}
    &= \beta_k\, v_k - \gamma_k\, \nabla f(\tilde{x}_k),
    \\
    x_{k+1}
    &= x_k + \nu_k\, v_{k+1}.
\end{align*}
NAG corresponds to $\nu_k \equiv 1$. In the layer-indexed lift, this containment
is the main architectural point: the TMM template can recover the
Nesterov/YuriiFormer update while learning a larger second-order filter class.

\subsection{Adam and AdamW}
\label{sec:adam-adamw}

Adam \citep{kingma2014adam} maintains exponential moving averages of
the first and second moments of the gradient
$g_k=\nabla f(x_k)$:
\begin{align*}
    m_{k}
    &= \beta_{1}\, m_{k-1} + (1-\beta_{1})\, g_k,
    \\
    s_{k}
    &= \beta_{2}\, s_{k-1} + (1-\beta_{2})\, g_k\odot g_k,
    \\
    \hat{m}_{k} &\textstyle= m_{k} / (1-\beta_1^{k}), \; \hat{s}_{k} = s_{k} / (1-\beta_2^{k}),\\
    x_{k+1}
    &= x_k - \gamma_k
        \frac{m_{k}}{\sqrt{s_{k}}+\varepsilon}.
\end{align*}
Here $\beta_{1},\beta_{2}\in[0,1)$ are decay rates and
$\varepsilon>0$ is a numerical floor. Bias correction by
$1-\beta_{j}^{k}$ is omitted in the layer-indexed lift to AdamFormer.

AdamW differs from Adam by decoupling weight decay $\lambda_k\in
(0,1)$ from the adaptive update, shrinking the iterate before
applying the Adam step:
\begin{equation*}
    x_{k+1}
    \;=\;
    (1-\lambda_k \gamma_k)\, x_k
    \;-\;
    \gamma_k\,
    \frac{m_{k}}{\sqrt{s_{k}}+\varepsilon}.
\end{equation*}

\subsection{Muon }
\label{sec:muon}

For a matrix-valued parameter $W \in \mathbb{R}^{m\times n}$ with
gradient $G_k=\nabla_W f(W_k)$, Muon
\citep{boreiko2025towards,bernstein2026muonpreconditioning} replaces the
momentum buffer by its orthogonal polar factor before applying it:
\begin{align*}
    G^{m}_{k+1}
    &= \beta_k\, G^{m}_{k} + (1-\beta_k)\, G_k,
    \\
    U_{k+1}
    &= \mathrm{NS}_K\!\bigl(G^{m}_{k+1}\bigr),
    \\
    W_{k+1}
    &= W_k - \gamma_k\, U_{k+1},
\end{align*}
where $\mathrm{NS}_K(\cdot)$ denotes $K$ steps of the quintic
Newton--Schulz iteration applied to the Frobenius-normalized buffer
$Y_0 = G^{m}_{k+1}/\|G^{m}_{k+1}\|_F$:
\begin{equation*}
    Y_{j+1}
    \;=\;
    Y_j\bigl(a\, I + b\, Y_j^\top Y_j + c\, (Y_j^\top Y_j)^2\bigr).
\end{equation*}
$Y_K$ approximates the polar factor of
$G^{m}_{k+1}$, so all singular values of $U_{k+1}$ are approximately
one. The resulting update is steepest descent under the spectral
norm.

\subsection{Shampoo and SOAP}
\label{sec:shampoo-soap}

For a matrix iterate $W \in \mathbb{R}^{m\times n}$ with gradient
$G_k$, Shampoo \citep{gupta2018shampoo} maintains Kronecker-factored
gram matrix accumulators
\begin{align*}
    L_{k+1}
    &= L_k + G_k G_k^\top
    \;\in\; \mathbb{R}^{m\times m},
    \\
    R_{k+1}
    &= R_k + G_k^\top G_k
    \;\in\; \mathbb{R}^{n\times n},
    \\
    W_{k+1}
    &= W_k - \gamma_k\, L_{k+1}^{-1/4}\, G_k\, R_{k+1}^{-1/4}.
\end{align*}
The matrix inverse fourth roots realize a structured preconditioner
of the tensor.

SOAP \citep{vyas2024soap} improves and stabilizes Shampoo by carrying
out Adam-style first- and second-moment updates in the eigenbasis of
the Kronecker factors. Letting $L_{k+1}=Q_L\Lambda_L Q_L^\top$ and
$R_{k+1}=Q_R\Lambda_R Q_R^\top$, SOAP rotates the gradient as
$\widetilde{G}_k = Q_L^\top G_k Q_R$, runs Adam in the rotated
coordinate system on $\widetilde{G}_k$, and rotates the resulting
update back. Compared to plain Shampoo, this adds per-direction
adaptivity at modest extra cost. 

A common simplification used in
practice, and adopted in our Shampoo/SOAP-inspired Transformer variant, is to
keep only the right factor $R_{k+1}$.

%% file: appendix/extra_formers.tex
\section{Additional Optimizer-Inspired Transformer Variants}
\label{sec:extra-formers}

We collect here the update rules for the optimizer-inspired transformer
variants whose details we deferred from
Section~\ref{sec:optimizer-inspired-transformer}: AdamWFormer (which adds
decoupled weight decay on top of AdamFormer) and four \emph{factorial
ablation} cells---HBFormer, RMSPropFormer, OrthoFormer, ShampooFormer---each
of which keeps exactly one of \{momentum, preconditioner\} and removes the
other from one of the four main-text architectures. The notation
$\ell\to\ell+1/2\to\ell+1$, the $(a)$ and $(m)$ superscripts on substep scalars,
and the auxiliary LayerNorms $\LN_v,\LN_u$ are as in
Section~\ref{sec:transformer-design}.

\subsection{AdamWFormer}
\label{sec:adamw-former}

AdamWFormer keeps the AdamFormer first/second-moment streams
$(M_\ell,S_\ell)$ and adds two scalars per substep,
$\lambda^{(a)},\lambda^{(m)}\in(0,1)$, implementing the AdamW-style decoupled
weight decay: before the adaptive update is applied, the residual is
contracted by $1-\lambda$. Eight learned scalars per layer
$(\beta_1^{(a)},\beta_2^{(a)},\gamma^{(a)},\lambda^{(a)},\beta_1^{(m)},\beta_2^{(m)},
\gamma^{(m)},\lambda^{(m)})$.

\begin{formerbox}{AdamWFormer}
The attention substep is
\begin{align*}
    &\textstyle G_\ell = \Attn_\ell(\LN(X_\ell)),
\\
    &\textstyle M_{\ell+1/2} = \beta_1^{(a)}\,M_\ell + (1-\beta_1^{(a)})\,G_\ell,
\\
    &\textstyle S_{\ell+1/2} = \beta_2^{(a)}\,S_\ell + (1-\beta_2^{(a)})\,G_\ell\odot G_\ell,
\\
    &\textstyle X_{\ell+1/2} = (1-\lambda^{(a)})\,X_\ell
       + \gamma^{(a)}\,\LN_u(
           \tfrac{M_{\ell+1/2}}
                {\sqrt{S_{\ell+1/2}}+\varepsilon}
       ),
\end{align*}
and the MLP substep is
\begin{align*}
    &\textstyle G_{\ell+1/2} = \MLP_\ell(\LN(X_{\ell+1/2})),
\\
    &\textstyle M_{\ell+1} = \beta_1^{(m)}\,M_{\ell+1/2}
       + (1-\beta_1^{(m)})\,G_{\ell+1/2},
\\
    &\textstyle S_{\ell+1} = \beta_2^{(m)}\,S_{\ell+1/2}
       + (1-\beta_2^{(m)})\,G_{\ell+1/2}\odot G_{\ell+1/2},
\\
    &\textstyle X_{\ell+1} = (1-\lambda^{(m)})\,X_{\ell+1/2}
       + \gamma^{(m)}\,\LN_u(
           \tfrac{M_{\ell+1}}
                {\sqrt{S_{\ell+1}}+\varepsilon}
       ).
\end{align*}
\end{formerbox}

We initialize $\lambda^{\mathrm{raw}}=-5$ so
$\sigma(\lambda^{\mathrm{raw}})\approx 0.007$: the model starts with
near-trivial decay and learns where to push it up. Setting
$\lambda^{(a)}\equiv\lambda^{(m)}\equiv 0$ recovers AdamFormer of
Section~\ref{sec:transformer-design}.

\subsection{SOAPFormer (Right-Factor Kronecker Preconditioning)}
\label{sec:soap-former}

A first-moment stream $M_\ell\in\mathbb{R}^{T\times H\times D}$ in head space
and a per-token right covariance $R_\ell\in\mathbb{R}^{T\times D\times D}$ are
propagated alongside the residual; the update is preconditioned by $R^{-1/2}$
on the channel side. Six learned scalars per layer: a first-moment decay
$\beta_1$, a covariance decay $\beta_R$, and an update gain $\gamma$ (an
attention and an MLP copy of each). Below $G_\ell\in\mathbb{R}^{T\times
H\times D}$ is the oracle output viewed as a per-token $(H,D)$ matrix, and
$\reshape:\mathbb{R}^{T\times H\times D}\to\mathbb{R}^{T\times d}$ collapses
the head and head-dim axes back to the residual-stream layout.

\begin{formerbox}{SOAPFormer}
The attention substep is
\begin{align*}
    &\textstyle G_\ell = \Attn_\ell(\LN(X_\ell)),
\\
    &\textstyle M_{\ell+1/2} = \beta_1^{(a)}\,M_\ell + (1-\beta_1^{(a)})\,G_\ell,\\
    &\textstyle R_{\ell+1/2} = \beta_R^{(a)}\,R_\ell
       + (1-\beta_R^{(a)})\,G_\ell^\top G_\ell,
\\
    &\textstyle X_{\ell+1/2} = X_\ell + \gamma^{(a)}\,\LN_u(
        \reshape(M_{\ell+1/2}\,R_{\ell+1/2}^{-1/2})
    ),
\end{align*}
and the MLP substep is
\begin{align*}
    &\textstyle G_{\ell+1/2} = \MLP_\ell(\LN(X_{\ell+1/2})),
\\
    &\textstyle M_{\ell+1} = \beta_1^{(m)}\,M_{\ell+1/2}
       + (1-\beta_1^{(m)})\,G_{\ell+1/2},
\\
    &\textstyle R_{\ell+1} = \beta_R^{(m)}\,R_{\ell+1/2}
       + (1-\beta_R^{(m)})\,G_{\ell+1/2}^\top G_{\ell+1/2},
\\
    &\textstyle X_{\ell+1} = X_{\ell+1/2} + \gamma^{(m)}\,\LN_u(
        \reshape(M_{\ell+1}\,R_{\ell+1}^{-1/2})
    ).
\end{align*}
\end{formerbox}

$M$ is an Adam-style first-moment EMA of $G$ and $R$ is an EMA of the
per-token outer product $G^\top G$; the update $M R^{-1/2}$ rescales each
token's head matrix by its accumulated channel covariance. We drop the SOAP
left covariance because it would precondition the token axis, where attention
already learns token mixing; keeping only the right/channel covariance also
avoids an additional token-side matrix factor (Appendix~\ref{sec:token-side-redundancy}).
$R^{-1/2}$ is computed by $K=10$ Newton iterations on matmuls; $R_0=I_D$ per
token, where $D=d/H$ is the head dimension. Although the identity initialization
keeps the EMA covariance positive definite, each new outer-product update has
rank at most $H=12$ in $D=64$, making the estimate poorly conditioned in this
low-rank regime. We therefore avoid eigendecomposition; empirically SOAPFormer
fails to converge (Section~\ref{sec:optimizer-experiments}).

\subsection{Factorial-Ablation Variants}
\label{sec:factorial-formers}

The four core architectures (TMMFormer, AdamFormer, MuonFormer, SOAPFormer)
populate the \emph{momentum}$\times$%
\emph{preconditioner} table at the cells (Nesterov+TMM, none),
(heavy-ball, per-coord), (heavy-ball, spectral), (heavy-ball, full-matrix).
The four variants below populate four additional cells of the same table,
each obtained by either (i) keeping momentum but removing the preconditioner
(HBFormer) or (ii) keeping a preconditioner but removing the momentum
(RMSPropFormer, OrthoFormer, ShampooFormer). All four share the
attention/MLP Lie--Trotter splitting and the auxiliary-LayerNorm convention
of the main text; only their auxiliary streams and update directions
change.

\subsubsection{HBFormer (Heavy-Ball Momentum, No Preconditioner)}
\label{sec:hb-former}

HBFormer keeps the velocity stream of TMM/YuriiFormer but removes lookahead and
fixes the $\nu$ iterate-gain scalar to $\nu\equiv1$---i.e., classical Polyak
heavy-ball. A velocity stream $V_\ell\in\mathbb{R}^{T\times d}$ is
propagated, with four learned scalars per layer
$(\beta^{(a)},\gamma^{(a)},\beta^{(m)},\gamma^{(m)})$.

\begin{formerbox}{HBFormer}
The attention substep is
\begin{align*}
    &\textstyle G_\ell = \Attn_\ell(\LN(X_\ell)),
\\
    &\textstyle V_{\ell+1/2} = \LN_v(\beta^{(a)}\,V_\ell + \gamma^{(a)}\,G_\ell),
\\
    &\textstyle X_{\ell+1/2} = X_\ell + V_{\ell+1/2},
\end{align*}
and the MLP substep is
\begin{align*}
    &\textstyle G_{\ell+1/2} = \MLP_\ell(\LN(X_{\ell+1/2})),
\\
    &\textstyle V_{\ell+1} = \LN_v(\beta^{(m)}\,V_{\ell+1/2}
       + \gamma^{(m)}\,G_{\ell+1/2}),
\\
    &\textstyle X_{\ell+1} = X_{\ell+1/2} + V_{\ell+1}.
\end{align*}
\end{formerbox}

HBFormer is the special case of YuriiFormer with
$\mu^{(a)}\equiv\mu^{(m)}\equiv 0$ (no lookahead), and the special case of
TMMFormer with $\mu^{(a)}\equiv\mu^{(m)}\equiv 0$ and
$\nu^{(a)}\equiv\nu^{(m)}\equiv 1$. It isolates the contribution of pure momentum to the
residual stream---no gradient lookahead, no learnable iterate gain.

\subsubsection{RMSPropFormer (Per-Coordinate Preconditioner, No Momentum)}
\label{sec:rmsprop-former}

RMSPropFormer keeps AdamFormer's second-moment stream $S_\ell$ and removes
the first-moment stream $M_\ell$---i.e., the update direction is the raw
oracle divided by $\sqrt{S}$. A single auxiliary stream
$S_\ell\in\mathbb{R}^{T\times d}_{>0}$ is propagated, with four learned
scalars per layer $(\beta_2^{(a)},\gamma^{(a)},\beta_2^{(m)},\gamma^{(m)})$.

\begin{formerbox}{RMSPropFormer}
The attention
substep is
\begin{align*}
    &\textstyle G_\ell = \Attn_\ell(\LN(X_\ell)),
\\
    &\textstyle S_{\ell+1/2} = \beta_2^{(a)}\,S_\ell + (1-\beta_2^{(a)})\,G_\ell\odot G_\ell,
\\
    &\textstyle X_{\ell+1/2} = X_\ell + \gamma^{(a)}\,\LN_u(
        \tfrac{G_\ell}{\sqrt{S_{\ell+1/2}}+\varepsilon}
    ),
\end{align*}
and the MLP substep is
\begin{align*}
    &\textstyle G_{\ell+1/2} = \MLP_\ell(\LN(X_{\ell+1/2})),
\\
    &\textstyle S_{\ell+1} = \beta_2^{(m)}\,S_{\ell+1/2}
       + (1-\beta_2^{(m)})\,G_{\ell+1/2}\odot G_{\ell+1/2},
\\
    &\textstyle X_{\ell+1} = X_{\ell+1/2} + \gamma^{(m)}\,\LN_u(
        \tfrac{G_{\ell+1/2}}{\sqrt{S_{\ell+1}}+\varepsilon}
    ).
\end{align*}
\end{formerbox}

$S_0=\mathbf{1}$ as in AdamFormer. RMSPropFormer is the special case of
AdamFormer with $\beta_1^{(a)}\equiv\beta_1^{(m)}\equiv 0$, so that the first moment $M$
collapses to the raw oracle $G$. It isolates the contribution of
coordinate-wise second-moment preconditioning, without any first-moment
smoothing.

\subsubsection{OrthoFormer (Spectral Preconditioner, No Momentum)}
\label{sec:ortho-former}

OrthoFormer is MuonFormer with the momentum EMA removed: the oracle output
is orthogonalized directly. There is no auxiliary stream; only two learned
scalars per layer $(\gamma^{(a)},\gamma^{(m)})$.

\begin{formerbox}{OrthoFormer}
The attention substep is
\begin{align*}
    &\textstyle G_\ell = \Attn_\ell(\LN(X_\ell)),
\\
    &\textstyle X_{\ell+1/2} = X_\ell + \gamma^{(a)}\,\LN_u(\NS(G_\ell)),
\end{align*}
and the MLP substep is
\begin{align*}
    &\textstyle G_{\ell+1/2} = \MLP_\ell(\LN(X_{\ell+1/2})),
\\
    &\textstyle X_{\ell+1} = X_{\ell+1/2} + \gamma^{(m)}\,\LN_u(\NS(G_{\ell+1/2})).
\end{align*}
\end{formerbox}

$\NS$ denotes the per-token, head-wise Newton--Schulz polar-factor operator
defined in Section~\ref{sec:transformer-design}. OrthoFormer is the special
case of MuonFormer with $\beta^{(a)}\equiv\beta^{(m)}\equiv 0$ (no EMA, fresh oracle every
step). It isolates the contribution of the spectral / isotropy bias of
Newton--Schulz, separate from any momentum-style smoothing of the update.

\subsubsection{ShampooFormer (Full-Matrix Preconditioner, No Momentum)}
\label{sec:shampoo-former}

ShampooFormer is SOAPFormer with the first-moment stream removed: the raw
oracle is preconditioned by the running channel-side covariance. A single
per-token right covariance $R_\ell\in\mathbb{R}^{T\times D\times D}$ is
propagated, with four learned scalars per layer
$(\beta_R^{(a)},\gamma^{(a)},\beta_R^{(m)},\gamma^{(m)})$. With $G_\ell$ the per-token
$(H,D)$ reshape of the oracle output and $\reshape$ the inverse of that
reshape (as in SOAPFormer, Appendix~\ref{sec:soap-former}),

\begin{formerbox}{ShampooFormer}
the attention substep
is
\begin{align*}
    &\textstyle G_\ell = \Attn_\ell(\LN(X_\ell)),
\\
    &\textstyle R_{\ell+1/2} = \beta_R^{(a)}\,R_\ell + (1-\beta_R^{(a)})\,G_\ell^\top G_\ell,
\\
    &\textstyle X_{\ell+1/2} = X_\ell + \gamma^{(a)}\,\LN_u(
        \reshape(G_\ell\,R_{\ell+1/2}^{-1/2})
    ),
\end{align*}
and the MLP substep is
\begin{align*}
    &\textstyle G_{\ell+1/2} = \MLP_\ell(\LN(X_{\ell+1/2})),
\\
    &\textstyle R_{\ell+1} = \beta_R^{(m)}\,R_{\ell+1/2}
       + (1-\beta_R^{(m)})\,G_{\ell+1/2}^\top G_{\ell+1/2},
\\
    &\textstyle X_{\ell+1} = X_{\ell+1/2} + \gamma^{(m)}\,\LN_u(
        \reshape(G_{\ell+1/2}\,R_{\ell+1}^{-1/2})
    ).
\end{align*}
\end{formerbox}

$R_0=I_D$ per token; $R^{-1/2}$ is computed by Newton iterations on matmuls
as in SOAPFormer. ShampooFormer is the special case of SOAPFormer with
$\beta_1^{(a)}\equiv\beta_1^{(m)}\equiv 0$, so the first moment $M$ collapses to the raw
oracle $G$. It isolates the contribution of full-matrix channel-side
preconditioning, without first-moment smoothing.

\paragraph{Summary.}
Together with the three main-text variants, the six additional variants of
this appendix fill in the rest of the (momentum$\,\times\,$preconditioner)
factorial table that we use to attribute the generalization gap between
TMM/Yurii and Adam(W) to its momentum vs.\ preconditioner components. The
results corresponding to each cell are reported in
Section~\ref{sec:optimizer-experiments} (factorial ablation).

%% file: appendix/matrix_preconditioning_design.tex
\section{Token-Side Redundancy in Matrix Preconditioning}
\label{sec:token-side-redundancy}

SOAP- or Shampoo-style matrix preconditioning often uses both a token-side
factor and a channel-side factor. SOAPFormer deliberately drops the token-side
factor and keeps only the right, channel-side covariance. The reason is that
the token-side factor acts on the same axis as attention: it mixes token
positions. The next proposition gives the linearized motivation for this design
choice.

\begin{tcolorbox}[thmmag, title={\textbf{Token-side preconditioning Redundancy}}]
\begin{proposition}
\label{prop:token-side-preconditioning-redundancy}
Consider a linearized attention oracle
\begin{equation*}
    \mathrm{Attn}(X)
    =
    A X W,
\end{equation*}
where
$X \in \mathbb{R}^{T \times d}$,
$A \in \mathbb{R}^{T \times T}$ is a token-mixing matrix,
and $W \in \mathbb{R}^{d \times d}$ is a channel map.
Let a token-side preconditioner act as
\begin{equation*}
    G
    \mapsto
    P G,
    \qquad
    P = L^{-1/2}
    \in \mathbb{R}^{T \times T}.
\end{equation*}
Then there exists another linear token-mixing matrix
\begin{equation*}
    \widetilde{A}
    =
    P A
\end{equation*}
such that
\begin{equation*}
    P\,\mathrm{Attn}(X)
    =
    \widetilde{A} X W .
\end{equation*}
Thus, in the linearized regime, a token-side preconditioner can be represented
as a replacement of the attention mixing matrix by another linear token-mixing
matrix.
\end{proposition}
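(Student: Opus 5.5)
The proof is a direct computation using only associativity of matrix multiplication. No earlier result of the paper is needed. I would first fix the setting precisely. $X\in\mathbb{R}^{T\times d}$, $A\in\mathbb{R}^{T\times T}$, $W\in\mathbb{R}^{d\times d}$, and the token-side factor $L\in\mathbb{R}^{T\times T}$ is such that $P=L^{-1/2}$ is well defined. For instance, take $L$ symmetric positive definite, as it is when it arises as an EMA of Gram matrices $GG^\top$ initialized at the identity (the token-side analogue of $R_0=I_D$ in Appendix~\ref{sec:soap-former}). Then $P$ is a well-defined $T\times T$ real matrix, which is all the argument uses.

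\textbf{Main step.} Apply $P$ to the oracle output $G=\mathrm{Attn}(X)=AXW$ and regroup:
\begin{equation*}
P\,\mathrm{Attn}(X)=P(AXW)=(PA)XW=\widetilde{A}XW,\qquad \widetilde{A}:=PA.
\end{equation*}
It then remains to observe that $\widetilde{A}$ has the claimed form.
\begin{itemize}
\item It lies in $\mathbb{R}^{T\times T}$, as a product of two $T\times T$ matrices.
\item It acts only on the token axis, i.e.\ by left multiplication.
\item It leaves the channel map $W$ unchanged.
\end{itemize}
So the preconditioned oracle is again an oracle of the same linear family, with the mixing matrix $A$ replaced by $\widetilde{A}$.

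\textbf{Possible obstacle.} The only real subtlety is interpretive. If $L$ depends on $X$ (e.g.\ $L$ is built from $G$ itself), then $\widetilde{A}$ depends on $X$, just as softmax attention weights do. The statement only asks for existence of $\widetilde{A}$ at the given input, so this is fine. I would add a one-line remark that the representation is pointwise in $X$, mirroring how the linearization of softmax attention already treats $A$ as input-dependent. I would also note that no invertibility of $P$ is required for the identity, only that $P$ exists.
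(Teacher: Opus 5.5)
Your proof is correct and matches the paper's argument: substitute $\mathrm{Attn}(X)=AXW$, regroup by associativity as $(PA)XW$, and set $\widetilde{A}=PA$. Your side remarks (well-definedness of $L^{-1/2}$, pointwise dependence of $\widetilde{A}$ on $X$, and that invertibility of $P$ is not needed) are sound, and the paper's accompanying remark adds that $\widetilde{A}$ need not be nonnegative, row-stochastic, or causal, so it need not be a valid softmax attention matrix.
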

\end{tcolorbox}

\begin{proof}
By direct substitution,
\begin{align*}
    P\,\mathrm{Attn}(X)
    &=
    P A X W .
\end{align*}
Defining $\widetilde{A}=PA$ gives
\begin{equation*}
    P\,\mathrm{Attn}(X)
    =
    \widetilde{A} X W .
\end{equation*}
Therefore the preconditioned operation has the form of another linear
token-mixing map.
\end{proof}

\begin{remark}[Interpretation]
\label{rem:soapformer-right-factor}
The proposition is a linearized representational statement. The matrix
$\widetilde{A}=PA$ need not be a valid softmax attention matrix: it may fail to
be nonnegative, row-stochastic, or causal. Thus the result should not be read as
an exact identity inside standard softmax attention, nor as a proof that
token-side preconditioning is algebraically unnecessary in the full model.
Rather, it motivates why a token-side SOAP/Shampoo factor is less compelling as
an architectural intervention: its role overlaps with attention's learned token
mixing, while the right factor acts on channel correlations that attention does
not directly precondition.
\end{remark}

%% file: appendix/experimental_details.tex
\section{Experimental Details}
\label{sec:experimental-details}

This appendix gives the full training and evaluation configuration used for
every optimizer-inspired Transformer variant in
Section~\ref{sec:optimizer-experiments}. Unless noted otherwise, all
variants share every value in Tables~\ref{tab:hp-backbone}--\ref{tab:hp-system};
they differ only
in the optimizer template of Section~\ref{sec:transformer-design} and in the
auxiliary streams it propagates. All numbers are taken directly from the
training code.

\subsection{Backbone and Tokenization}
\label{sec:appendix-backbone}

Every model uses a $12$-layer, $12$-head, $d=768$ pre-norm Transformer with
a context length of $1024$ tokens. Tokenization is the GPT-2 byte-pair
encoding (\texttt{tiktoken} \texttt{gpt2}, $|\mathcal V|=50{,}304$), and the
output projection is weight-tied to the token embedding. The vanilla model
has $124$M parameters; every auxiliary-stream variant has $\approx 163$M,
the extra $\approx 39$M coming entirely from the separate learned
token+position embeddings that initialize the auxiliary stream(s)
($V_0$, $M_0$, $\ldots$). The per-layer learned scalars
$\omega_\ell$ are a negligible parameter count ($\le 8$ per layer) but are
trained on a separate, higher learning rate (see below).

\subsection{Architectural Constants and Initialization}
\label{sec:appendix-arch-constants}

The per-layer scalars are parameterized as in the Notation paragraph of
Section~\ref{sec:transformer-design}: scalars in $(0,1)$ as $\sigma(\cdot)$
of an unconstrained raw weight, and positive scalars as $\softplus(\cdot)$.
TMMFormer's velocity-reinjection gain is initialized so that
$\softplus(\nu^{\mathrm{raw}})\approx 1$; training therefore begins in the
$\nu\equiv 1$ YuriiFormer regime and learns where to deviate. MuonFormer
reshapes each token's $d$-dimensional update into an $(H,D)=(12,64)$ matrix
(heads $\times$ head dimension) and applies the quintic Newton--Schulz
iteration independently per token for $K=5$ steps to recover the orthogonal
polar factor, then reshapes back to $d$.

\subsection{Two-Optimizer Parameter Training}
\label{sec:appendix-two-opt}

Network parameters are split into four groups, each with its own optimizer
and learning rate (Table~\ref{tab:hp-optimizers}):

\begin{itemize}
  \item \textbf{$2$D weight matrices} (attention/MLP projections):
        \textbf{Muon}, Nesterov momentum $0.95$, weight decay $0$, peak
        learning rate $0.02$ (TS) / $0.004$ (OWT).
  \item \textbf{Token/position embeddings}: AdamW, lr $6\times10^{-4}$,
        weight decay $0.1$.
  \item \textbf{LayerNorm gains}: AdamW, lr $6\times10^{-4}$, weight
        decay $0$.
  \item \textbf{Learned per-layer scalars} $\omega_\ell$ (the
        $\mathrm{raw}$ parameters): AdamW, lr $3\times10^{-3}$, weight
        decay $0$.
\end{itemize}

The Muon learning rate is the only optimization hyperparameter that differs
between corpora; everything else is identical for TS and OWT. The single
global schedule multiplier (linear warmup, then cosine decay to
$0.1\times$ peak) is applied to all four groups simultaneously, so the
ratio between the four learning rates is held fixed throughout training.

\subsection{Hyperparameter Table}
\label{sec:appendix-hparam-table}

\begin{tablebox}{Backbone (shared by all variants)}
\setlength{\tabcolsep}{4pt}
\renewcommand{\arraystretch}{1.15}
\begin{tabular}{@{}ll@{}}
\textbf{Hyperparameter} & \textbf{Value} \\
\midrule
Layers / heads / $d$            & $12$ / $12$ / $768$ \\
Context length                  & $1024$ \\
Tokenizer / $|\mathcal V|$      & GPT-2 BPE / $50{,}304$ \\
Output head                     & weight-tied \\
Params (vanilla / aux-stream)   & $124$M / $\approx 163$M \\
\end{tabular}
\end{tablebox}
\captionof{table}{Backbone configuration, identical for all variants and
both corpora.}
\label{tab:hp-backbone}

\begin{tablebox}{Optimization budget}
\setlength{\tabcolsep}{4pt}
\renewcommand{\arraystretch}{1.15}
\begin{tabular}{@{}lll@{}}
\textbf{Hyperparameter} & \textbf{TS} & \textbf{OWT} \\
\midrule
Total steps                     & $10{,}000$ & $30{,}000$ \\
Warmup steps                    & $1{,}000$  & $3{,}000$ \\
LR schedule                     & \multicolumn{2}{c}{warmup $\to$ cosine} \\
Min-LR ratio                    & \multicolumn{2}{c}{$0.1$} \\
Micro-batch                     & \multicolumn{2}{c}{$8$} \\
Grad accumulation               & \multicolumn{2}{c}{$60$} \\
Effective batch (seqs)          & \multicolumn{2}{c}{$480$} \\
Tokens / step                   & \multicolumn{2}{c}{$\approx 4.9\times10^{5}$} \\
\end{tabular}
\end{tablebox}
\captionof{table}{Optimization budget. Only total and warmup steps differ
between TS and OWT.}
\label{tab:hp-budget}

\begin{tablebox}{Optimizers (per parameter group)}
\setlength{\tabcolsep}{4pt}
\renewcommand{\arraystretch}{1.15}
\begin{tabular}{@{}lll@{}}
\textbf{Hyperparameter} & \textbf{TS} & \textbf{OWT} \\
\midrule
Muon ($2$D weights), lr         & $0.02$ & $0.004$ \\
\quad momentum / Nesterov / wd  & \multicolumn{2}{c}{$0.95$ / yes / $0$} \\
AdamW (embeddings), lr / wd     & \multicolumn{2}{c}{$6\!\times\!10^{-4}$ / $0.1$} \\
AdamW (LayerNorm), lr / wd      & \multicolumn{2}{c}{$6\!\times\!10^{-4}$ / $0$} \\
AdamW (scalars), lr / wd        & \multicolumn{2}{c}{$3\!\times\!10^{-3}$ / $0$} \\
\end{tabular}
\end{tablebox}
\captionof{table}{Optimizer routing and hyperparameters per parameter group.
The Muon peak LR is the only TS/OWT difference.}
\label{tab:hp-optimizers}

\begin{tablebox}{System and evaluation}
\setlength{\tabcolsep}{4pt}
\renewcommand{\arraystretch}{1.15}
\begin{tabular}{@{}ll@{}}
\textbf{Hyperparameter} & \textbf{Value} \\
\midrule
Precision                       & \texttt{bfloat16} autocast \\
Compilation                     & \texttt{torch.compile} \\
Data parallel                   & $2$-GPU DDP \\
Seeds                           & single ($42$) \\
Val interval / \#batches        & $100$ steps / $160$ \\
Checkpoint                      & best val CE \\
Downstream harness              & \texttt{lm-eval} v$0.4.3$ \\
HellaSwag / ARC-Easy            & $10$-shot / $25$-shot \\
Downstream metric               & \texttt{acc\_norm} \\
\end{tabular}
\end{tablebox}
\captionof{table}{System and evaluation settings, identical for all variants
and both corpora.}
\label{tab:hp-system}

\subsection{Pretraining and Downstream Evaluation}
\label{sec:appendix-eval}

During training we evaluate the validation cross-entropy every $100$ steps
on $160$ fixed batches and keep the checkpoint with the lowest value
(\texttt{best.pt}); pretraining quality in the main text is this best
validation loss in nats/token. Downstream transfer is measured only on the
best \emph{OWT} checkpoint of each variant---TS-pretrained checkpoints sit
at chance on these benchmarks because the TinyStories distribution is too
narrow to transfer---using \texttt{lm-evaluation-harness}~v0.4.3 with
HellaSwag ($10$-shot) and ARC-Easy ($25$-shot). We report length-normalized
accuracy (\texttt{acc\_norm}), the harness default for these
multiple-choice tasks. All runs use a single seed ($42$); differences
within ${\sim}0.01$ \texttt{acc\_norm} on these benchmarks at this model
scale are within seed noise and are not interpreted as signal.

\subsection{Wall-Clock}
\label{sec:appendix-wall-clock}

Per-step wall-clock for the OpenWebText runs, as printed by the training loop
(hardware varies across SLURM jobs, so values are indicative). The vanilla
transformer and the momentum-stream variants are all comparable:
VanillaTransformer ${\approx}3.3$--$3.4$\,s/step, TMMFormer
${\approx}2.7$--$3.9$\,s/step, YuriiFormer ${\approx}2.8$\,s/step, and
AdamFormer ${\approx}3.4$\,s/step. The TMMFormer auxiliary velocity stream and
per-layer scalars add negligible cost relative to attention and the MLP, so
TMMFormer does not materially increase runtime over the vanilla baseline. The
only variant with a large penalty is MuonFormer, whose per-token
Newton--Schulz iteration runs at ${\approx}10$--$16$\,s/step
(${\approx}3$--$5\times$ slower).

\subsection{Detailed Results and Ablation}
\label{sec:detailed-results}

\begin{tablebox}{Full results}
\setlength{\tabcolsep}{4pt}
\renewcommand{\arraystretch}{1.15}
\begin{tabular}{@{}lcccc@{}}
& \multicolumn{2}{c}{\textbf{val loss} $\downarrow$}
& \multicolumn{2}{c}{\textbf{acc\_norm} (\%) $\uparrow$} \\
\cmidrule(lr){2-3}\cmidrule(lr){4-5}
\textbf{Variant} & TS & OWT & HS & ARC \\
\midrule
VanillaTransformer & $1.1569$ & $3.0078$ & $30.20$ & $41.67$ \\
AdamFormer         & $1.1528$ & $2.9911$ & $30.96$ & $43.39$ \\
AdamWFormer        & $1.1472$ & $2.9883$ & $30.08$ & $41.88$ \\
\textbf{TMMFormer} & $\mathbf{1.1284}$ & $\mathbf{2.9342}$
                   & $\mathbf{31.82}$  & $\mathbf{43.43}$ \\
\end{tabular}
\end{tablebox}
\captionof{table}{Full results: best val loss (nats/token) and OWT downstream
\texttt{acc\_norm} (\%); best per column bold. Only variants with results on
every task are listed; partial-task variants (MuonFormer, SOAPFormer) are
reported in the text below.}
\label{tab:full-results}

\begin{figure}[t]
\centering
\includegraphics[width=\textwidth]{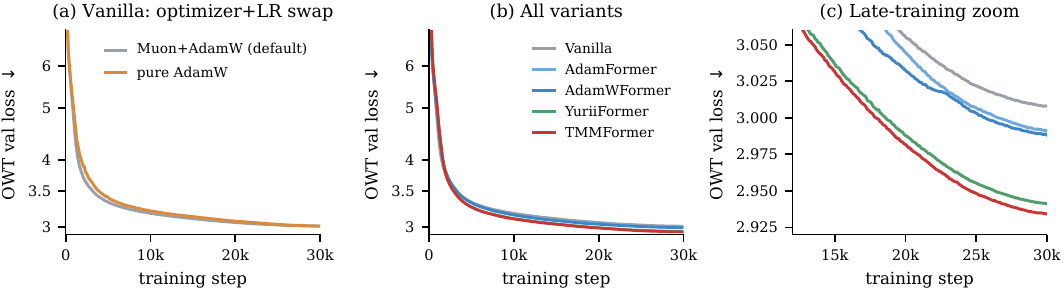}
\caption{OWT validation-loss training curves (real per-step values from
the canonical SLURM runs). \textbf{(a)} Vanilla under the default
Muon$+$AdamW recipe vs.\ a single pure-AdamW optimizer (optimizer$+$LR
swap). \textbf{(b)} All optimizer-inspired variants. \textbf{(c)} The same,
zoomed to late training, where the ordering TMMFormer $<$ YuriiFormer $<$
AdamFormer ${\approx}$ AdamWFormer $<$ Vanilla is clear.}
\label{fig:loss-curves}
\end{figure}

\paragraph{Full results.}
Table~\ref{tab:full-results} and Figure~\ref{fig:results-overview} report
all variants against the vanilla baseline of
Section~\ref{sec:preliminary}; per-step OWT training curves for every
variant are in Figure~\ref{fig:loss-curves}.
TMMFormer attains the lowest validation loss on both corpora ($1.1284$ on
TS, $2.9342$ on OWT) and the best downstream transfer ($31.8\%$ HellaSwag,
$43.4\%$ ARC-Easy); the momentum-stream design improves on vanilla by
${\approx}0.029$ nats on TS and ${\approx}0.074$ on OWT, and the
pretraining ordering (TMMFormer $>$ AdamFormer ${\approx}$ AdamWFormer $>$
Vanilla) transfers \emph{exactly} to downstream accuracy, so the gain is
genuine generalization rather than a pretraining-loss artifact. AdamFormer
and AdamWFormer recover part of the gap---an adaptive per-coordinate update
is better than none---but the diagonal $\sqrt{S}$ preconditioner is largely
absorbed by the subsequent $\LN_u$
(Appendix~\ref{sec:layernorm-redundancy}), so they fall well short of the
second-order momentum recurrence. MuonFormer and SOAPFormer do not have
results on every task and are therefore omitted from
Table~\ref{tab:full-results}; we report them here. MuonFormer converges on
both corpora but its spectral preconditioning yields no gain ($1.1503$ on TS,
between the two Adam variants, and $3.0096$ on OWT, no better than vanilla).
SOAPFormer converges on TinyStories ($1.1431$, the weakest matrix-%
preconditioned variant) but its per-token full-matrix $R^{-1/2}$ is
poorly conditioned in the per-token covariance estimate and did not complete the
OpenWebText run.
These two negative results reinforce the central finding:
among updates that \emph{do} train stably inside the residual stream, the
Nesterov-style and triple-momentum second-order recurrences are the most
effective Transformer blocks, consistent with the optimization view of
Section~\ref{sec:preliminary} (triple momentum has an optimal rate on the
strongly-convex quadratic model used to study the depth recurrence) and
analyzed architecturally in
Appendix~\ref{sec:momentum-residual-filtering}.

\begin{tablebox}{Optimizer-on-2D ablation (Vanilla, OWT, step-aligned)}
\setlength{\tabcolsep}{6pt}
\renewcommand{\arraystretch}{1.15}
\begin{tabular}{@{}rccc@{}}
\textbf{step} & \textbf{pure AdamW} & \textbf{Muon$+$AdamW}
              & \textbf{$\Delta$} \\
\midrule
$1$k  & $4.844$ & $4.638$ & $+0.206$ \\
$2$k  & $3.923$ & $3.734$ & $+0.189$ \\
$3$k  & $3.651$ & $3.528$ & $+0.122$ \\
$5$k  & $3.397$ & $3.334$ & $+0.063$ \\
$7$k  & $3.294$ & $3.251$ & $+0.043$ \\
$10$k & $3.213$ & $3.178$ & $+0.035$ \\
$15$k & $3.132$ & $3.107$ & $+0.025$ \\
$20$k & $3.074$ & $3.055$ & $+0.018$ \\
$25$k & $3.030$ & $3.022$ & $+0.009$ \\
$30$k & $3.011$ & $3.008$ & $+0.002$ \\
\end{tabular}
\end{tablebox}
\captionof{table}{Step-aligned OWT val loss for \emph{Vanilla} under the two
parameter-training optimizers (Muon hybrid vs.\ pure AdamW). The
Muon-hybrid early-training advantage decays monotonically to within one
val-eval by step $30$k.}
\label{tab:opt-ablation}

\paragraph{Optimizer and learning-rate ablation.}
To check that the TMMFormer advantage is architectural and not an artifact
of the Muon$+$AdamW training recipe, we run two independent OWT ablations.

\emph{Optimizer on $2$D weights.} We replace the Muon optimizer that
updates the $48$ $2$D matrix weights (qkv, out, w1, w2 across the $12$
blocks) with AdamW, holding the schedule, batch, and the embedding and
LayerNorm optimizer fixed at AdamW $@\,6{\times}10^{-4}$. Combined with the
two architectures this is a full $2{\times}2$: Vanilla goes from $3.0078$
(Muon hybrid) to $3.0103$ (pure AdamW), $\Delta{=}0.0025$, while TMMFormer
goes from $2.9342$ to $2.9696$, $\Delta{=}0.0354$. TMMFormer beats Vanilla
under both optimizers (architectural gap $0.074$ under the Muon hybrid,
$0.041$ under pure AdamW). The two optimizers are essentially
indistinguishable on Vanilla at the final step (within one val-eval) but
distinguishable on TMMFormer ($\approx 14{\times}$ larger gap); the
step-aligned Vanilla curves (Table~\ref{tab:opt-ablation},
Figure~\ref{fig:loss-curves}a) show that the Muon-hybrid early-training
lead on Vanilla decays monotonically and disappears by $30$k, so the
converged Vanilla numbers are essentially on top of each other. The
architectural ordering survives in either column.

\emph{Learning-rate sweep (partial).} Holding the Muon$+$AdamW recipe
fixed, we halve the Muon peak learning rate from $4{\times}10^{-3}$ to
$2{\times}10^{-3}$. Vanilla degrades from $3.0078$ to $3.0288$ ($+0.021$
nats) and TMMFormer from $2.9342$ to $2.9634$ ($+0.029$); the architectural
gap shrinks from $0.074$ to $0.066$ (${\approx}11\%$ compression). The
sweep is partial: only the $\times 0.5$ Muon-LR cell ran for each
architecture; the $\times 2$ Muon-LR cells and the four pure-AdamW LR
cells of the originally designed $2{\times}2{\times}2$ grid were submitted
and then cancelled, and seed count is $N{=}1$. The robustness claim is
therefore that the TMMFormer$>$Vanilla ordering is preserved under halving
the Muon LR, not across a full LR grid.

In both ablations the architectural effect (TMMFormer vs.\ Vanilla,
${\approx}0.07$ nats on OWT) is an order of magnitude larger than the
optimizer- or LR-induced effect on either architecture, and consistent in
sign across both perturbations.

\paragraph{Parameter-matched controls.}
TMMFormer carries ${\approx}39$M more parameters than the vanilla
$12$L/$768$d backbone ($163.8$M vs.\ $124.4$M); over $99\%$ of this is the
duplicate token$+$position embedding table that initializes the velocity
stream, and the within-block additions (two extra LayerNorms per block) are
negligible in both parameter count and per-step FLOPs. To rule out a pure
parameter-count effect we train a Vanilla variant whose parameter count
matches TMMFormer's, with the optimization recipe of
Section~\ref{sec:appendix-two-opt} held fixed (TinyStories, $10$k steps,
seed $42$; Muon on $2$D weights with peak lr $2{\times}10^{-2}$ and AdamW
on embeddings and LayerNorms with lr $6{\times}10^{-4}$; warmup $1$k steps
followed by cosine decay to a floor of $0.1{\times}$ peak; gradient clip
$1.0$; effective batch of $480$ sequences of length $1024$). The match is
single-axis: \emph{width-$900$} ($12$L, $d_{\mathrm{model}}{=}900$, $12$
heads, $d_{\mathrm{head}}{=}75$, $162.86$M---$0.6\%$ below TMMFormer) is
the clean control. A secondary \emph{depth-$18$} run ($18$L,
$d_{\mathrm{model}}{=}768$, $166.85$M) was disrupted by partition-walltime
requeues and is reported only as suggestive.

Width-$900$ Vanilla reaches a best validation loss of $1.1454$ on
TinyStories, versus $1.1578$ for the default Vanilla and $1.1272$ for
TMMFormer, closing
$(1.1578-1.1454)/(1.1578-1.1272)\approx 40\%$ of the TMMFormer--Vanilla
gap. The remaining ${\approx}0.0182$ nats is roughly $9\times$ the
per-seed standard deviation reported in the seed-variance arm
($\le 0.0028$ at the default sizes), so it is well outside seed noise.
The depth-$18$ Vanilla appears to close a larger fraction of the gap
(best validation ${\approx}1.130$); pending a clean rerun we do not draw
any conclusion from the width--depth contrast. The headline conclusion
stands either way: the bulk of TMMFormer's advantage on TinyStories is
\emph{not} explained by parameter count---giving Vanilla TMMFormer's
parameter budget, spent as extra width, recovers under half the gap. A
full isolation of velocity \emph{dynamics} from the duplicate velocity
\emph{embeddings} would require pairing this control with a TMM variant
that drops the embedding table but keeps the dynamics (and vice versa);
we leave that to future work.

%% file: appendix/precond_redundancy.tex
\section{Preconditioning Redundancy in Pre-Norm Transformers}
\label{sec:preconditioning-redundancy}
\label{sec:layernorm-redundancy}

Adam- or RMSProp-style diagonal preconditioning is most useful when different
coordinates of an update have substantially different scales. In a pre-norm
Transformer, however, each attention or MLP oracle receives normalized token
representations. This does not make every diagonal preconditioner algebraically
redundant: in general, $\LN(Dx) \neq \LN(x)$ for a non-scalar positive diagonal
matrix $D$. Rather, LayerNorm motivates a weaker and more useful condition: the
oracle outputs may already have nearly balanced coordinate-wise second moments.
Under this condition, the Adam- or RMSProp-style token-space preconditioner
collapses to an approximate scalar step-size rescaling.

For $x \in \mathbb{R}^d$, define the zero-mean LayerNorm map without learned
gain or bias by
\begin{equation*}
    \LN(x)
    =
    \frac{x-\bar{x}\mathbf{1}}
         {\|x-\bar{x}\mathbf{1}\|_2/\sqrt{d}},
    \quad
    \bar{x}
    =
    \frac{1}{d}\mathbf{1}^{\top}x .
\end{equation*}

\begin{tcolorbox}[thmblue, title={\textbf{Balanced moments flatten gains}}]
\begin{theorem}
\label{thm:layernorm-diagonal-redundancy}
Consider an Adam- or RMSProp-style diagonal preconditioner
\begin{equation*}
    D_s
    =
    \diag\!\left(
    \frac{1}{\sqrt{s_i}+\delta}
    \right)_{i=1}^d,
\end{equation*}
with $\delta\ge0$, applied to an update direction before the update LayerNorm
$\LN_u$ used in AdamFormer and RMSPropFormer. Let
\begin{align*}
    \rho_i
    &=
    \sqrt{s_i},
    &
    \rho_{\min}
    &=
    \min_i \rho_i,
    &
    \rho_{\max}
    &=
    \max_i \rho_i .
\end{align*}
Assume the second-moment stream is nearly coordinate-balanced:
\begin{equation*}
    \frac{\rho_{\max}^2}{\rho_{\min}^2}
    \le 1+\epsilon
    \quad
    \text{for some } \epsilon \in [0,1],
    \quad
    \rho_{\min}>0.
\end{equation*}
Then there exists a scalar $\alpha>0$ such that
\begin{align*}
    \left\|\frac{D_s}{\alpha}-I\right\|_2
    &\le
    \frac{\sqrt{1+\epsilon}-1}{1+\delta/\rho_{\max}}
    \le
    \frac{\epsilon}{2}.
\end{align*}
Consequently, the diagonal Adam- or RMSProp-style preconditioner in this
Transformer substep is approximately a scalar multiple of the identity. Since
$\LN_u$ is invariant to positive scalar rescaling, the scalar part of the
preconditioner is absorbed by the update LayerNorm. Thus any nontrivial
contribution of diagonal preconditioning must come from the $O(\epsilon)$
non-scalar deviation of $D_s$ from a scalar matrix.
\end{theorem}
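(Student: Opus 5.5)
The plan is to choose the scalar $\alpha$ explicitly so that $D_s/\alpha - I$ is a diagonal matrix with nonnegative entries, and then bound its largest entry. Since $D_s$ and $I$ are both diagonal, $\|D_s/\alpha - I\|_2$ is the maximum over $i$ of $|d_i/\alpha - 1|$, where $d_i = 1/(\rho_i+\delta)$. So the whole problem reduces to a scalar inequality over the coordinates. The entries $d_i$ lie in $[1/(\rho_{\max}+\delta),\,1/(\rho_{\min}+\delta)]$. I will take $\alpha = 1/(\rho_{\max}+\delta)$, the smallest entry, which makes every ratio $d_i/\alpha \ge 1$ and removes the absolute value.

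With this choice, for each $i$,
\begin{equation*}
    \frac{d_i}{\alpha}-1
    =\frac{\rho_{\max}+\delta}{\rho_i+\delta}-1
    =\frac{\rho_{\max}-\rho_i}{\rho_i+\delta}
    \le\frac{\rho_{\max}-\rho_{\min}}{\rho_{\min}+\delta}.
\end{equation*}
The inequality holds because the numerator is largest and the denominator smallest at $\rho_i=\rho_{\min}$. Write $r=\rho_{\max}/\rho_{\min}$; the balance assumption gives $r\le\sqrt{1+\epsilon}$. Dividing numerator and denominator by $\rho_{\min}$ turns the right-hand side into $(r-1)/(1+\delta/\rho_{\min})$.

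Next, $\rho_{\min}\le\rho_{\max}$ implies $\delta/\rho_{\max}\le\delta/\rho_{\min}$. Hence the denominator $1+\delta/\rho_{\min}$ can be replaced by the smaller quantity $1+\delta/\rho_{\max}$, which only enlarges the bound. Combined with $r-1\le\sqrt{1+\epsilon}-1$, this gives the first claimed inequality. For the second, note that $1+\delta/\rho_{\max}\ge1$, and $\sqrt{1+\epsilon}\le 1+\epsilon/2$ by concavity of the square root (its tangent line at $0$ dominates it). This yields the bound $\epsilon/2$.

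The consequence about absorption follows directly from the stated formula for $\LN$. For $c>0$, the mean-centering map is linear, so $cx-\overline{cx}\mathbf{1}=c(x-\bar x\mathbf{1})$, and the normalizer also scales by $c$; hence $\LN(cx)=\LN(x)$. Writing $D_s=\alpha(I+E)$ with $\|E\|_2\le\epsilon/2$, we get $\LN_u(D_s g)=\LN_u((I+E)g)$. So the only effect beyond the identity comes from $E$, which is $O(\epsilon)$.

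None of these steps is a real obstacle. The one place needing care is the direction of the $\delta/\rho_{\max}$ versus $\delta/\rho_{\min}$ comparison: the bound must be weakened correctly rather than tightened. It is also worth noting the degenerate case $\delta=0$, where $\alpha=1/\rho_{\max}$ and the bound reduces to $r-1$.
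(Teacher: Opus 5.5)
Your proof is correct and follows essentially the same route as the paper. Both fix $\alpha$ at an extreme diagonal entry, reduce $\|D_s/\alpha-I\|_2$ to a coordinatewise scalar bound, apply the balance assumption, and finish with concavity of the square root. The only difference is that you normalize by the smallest entry $1/(\rho_{\max}+\delta)$, while the paper normalizes by the largest entry $1/(\rho_{\min}+\delta)$; that choice gives the exact value $(\rho_{\max}-\rho_{\min})/(\rho_{\max}+\delta)$, producing the stated $1+\delta/\rho_{\max}$ denominator directly with the slightly tighter numerator $1-1/\sqrt{1+\epsilon}$, so the paper relaxes the numerator where you relax the denominator.
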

\end{tcolorbox}

\begin{proof}
The diagonal entries of $D_s$ are
\begin{equation*}
    d_i = \frac{1}{\rho_i+\delta}.
\end{equation*}
Since $d_i$ is decreasing in $\rho_i$, the largest and smallest diagonal
entries are
\begin{align*}
    d_{\max}
    &=
    \frac{1}{\rho_{\min}+\delta},
    &
    d_{\min}
    &=
    \frac{1}{\rho_{\max}+\delta}.
\end{align*}
Choose $\alpha=d_{\max}$. Then every normalized diagonal entry satisfies
\begin{align*}
    \frac{d_i}{\alpha}
    &=
    \frac{\rho_{\min}+\delta}{\rho_i+\delta}
    \\
    &\in
    \left[
        \frac{\rho_{\min}+\delta}
             {\rho_{\max}+\delta},
        1
    \right].
\end{align*}
Because $D_s/\alpha-I$ is diagonal,
\begin{align*}
    \left\|\frac{D_s}{\alpha}-I\right\|_2
    &=
    1-
    \frac{\rho_{\min}+\delta}
         {\rho_{\max}+\delta}
=
    \frac{\rho_{\max}-\rho_{\min}}
         {\rho_{\max}+\delta}.
\end{align*}
The balance assumption gives
\begin{equation*}
    \rho_{\max}
    \le
    \sqrt{1+\epsilon}\,\rho_{\min},
\end{equation*}
or equivalently
\begin{equation*}
    \rho_{\max}-\rho_{\min}
    \le
    \rho_{\max}
    \left(
        1-\frac{1}{\sqrt{1+\epsilon}}
    \right).
\end{equation*}
Therefore
\begin{align*}
    \left\|\frac{D_s}{\alpha}-I\right\|_2
    &\le
    \frac{
        1-\frac{1}{\sqrt{1+\epsilon}}
    }{
        1+\delta/\rho_{\max}
    }
    \\
    &\le
    1-\frac{1}{\sqrt{1+\epsilon}}
    \\
    &\le
    \sqrt{1+\epsilon}-1
    \\
    &\le
    \frac{\epsilon}{2},
\end{align*}
where the final inequality holds for $\epsilon \in [0,1]$ by concavity of the
square-root function. This proves that $D_s$ differs from a scalar multiple
of the identity by at most $O(\epsilon)$ in spectral norm.

Finally, both AdamFormer and RMSPropFormer apply $\LN_u$ after the diagonal
preconditioner. For any positive scalar $c$, $\LN_u(cz)=\LN_u(z)$ up to the
fixed numerical epsilon in the LayerNorm denominator. Thus the scalar component
of $D_s$ cannot provide an independent update direction after $\LN_u$.
\end{proof}

\begin{remark}[Interpretation]
\label{rem:layernorm-role}
The balance assumption is natural in this setting because the Adam- or RMSProp-style
second-moment stream is driven by oracle outputs
$q=\mathcal{O}_{\ell}(\LN(x))$, rather than raw, unnormalized token states.
The oracle input is normalized, the coordinate projections are dense and
learned, and the final adaptive update is again normalized by $\LN_u$ before
entering the residual stream. These architectural features make large
persistent coordinate-scale disparities less central than in parameter-space
optimization, where Adam is most useful.

The theorem shows that, under this balanced-second-moment condition, the
Adam- or RMSProp-style diagonal preconditioner in token space is nearly a scalar
matrix. Since $\LN_u$ removes positive scalar rescalings of the update, the
scalar part of the preconditioner does not create a new residual direction. Any
useful effect must therefore come from the small non-scalar deviation of
$D_s$ from a scalar multiple of the identity.
\end{remark}

%% file: appendix/momentum_theory.tex
\section{Momentum as Second-Order Residual-Stream Filtering}
\label{sec:momentum-residual-filtering}

We now give a local explanation for why momentum-stream Transformer variants
can outperform a vanilla pre-norm Transformer at matched depth. The point is
architectural: a momentum stream changes the forward residual dynamics from a
first-order recurrence to a second-order recurrence. In a linearized local
model, this gives a richer polynomial filter over token-feature modes.

\subsection{Local Linearized Sandbox}
\label{sec:linearized-sandbox}

Let the hidden state at layer $\ell$ be
\begin{equation*}
    X_\ell
    \in
    \mathbb{R}^{T\times d},
\end{equation*}
and let $X^\star \in \mathbb{R}^{T\times d}$ be a task-relevant hidden
representation. We assume a local quadratic surrogate energy
\begin{equation*}
    \mathcal{F}(X)
    =
    \frac{1}{2}
    \langle X-X^\star,\,
    H(X-X^\star)\rangle,
\end{equation*}
where $H$ is self-adjoint positive definite on token-embedding space, with
\begin{equation*}
    0<\mu
    \le
    \lambda_i(H)
    \le
    L,
    \qquad
    \kappa=\frac{L}{\mu}.
\end{equation*}
In the clean sandbox case, the combined Transformer oracle is aligned with the
negative gradient of this surrogate:
\begin{equation}
    G(X)
    =
    -H(X-X^\star).
    \label{eq:linearized-oracle}
\end{equation}
The language-modeling logits are produced by the shared output head
\begin{equation*}
    Z(X)=XW^\top,
\end{equation*}
where $W\in\mathbb{R}^{|\mathcal{V}|\times d}$ is the token embedding/output
matrix and $|\mathcal{V}|$ is the vocabulary size. Thus
$Z(X)\in\mathbb{R}^{T\times |\mathcal{V}|}$.
and the language-modeling loss is
\begin{equation*}
    \mathcal{L}_{\mathrm{LM}}(X)
    =
    \operatorname{CE}(Z(X),Y).
\end{equation*}
Assume that, near $X^\star$, this loss is $C$-smooth as a function of the
hidden representation and that $X^\star$ is a local minimizer. Then
\begin{equation}
    \mathcal{L}_{\mathrm{LM}}(X)
    -
    \mathcal{L}_{\mathrm{LM}}(X^\star)
    \le
    \frac{C}{2}
    \|X-X^\star\|_F^2 .
    \label{eq:lm-smoothness}
\end{equation}

\subsection{Vanilla as a First-Order Filter}
\label{sec:vanilla-filter}

Let $E_\ell=X_\ell-X^\star$. In the sandbox, a vanilla residual step is
\begin{equation*}
    X_{\ell+1}
    =
    X_\ell
    -
    \eta H(X_\ell-X^\star),
\end{equation*}
so
\begin{equation*}
    E_{\ell+1}
    =
    (I-\eta H)E_\ell .
\end{equation*}
If $H=Q\Lambda Q^\top$, each eigenmode evolves independently:
\begin{equation*}
    e_{\ell+1,i}
    =
    (1-\eta\lambda_i)e_{\ell,i}.
\end{equation*}
After $N$ layers,
\begin{equation*}
    E_N
    =
    p_N(H)E_0,
    \qquad
    p_N(\lambda)
    =
    (1-\eta\lambda)^N .
\end{equation*}
Thus vanilla implements a first-order polynomial filter over the local
token-feature spectrum.

\begin{tcolorbox}[thmmag, title={\textbf{Best uniform vanilla contraction}}]
\begin{lemma}
\label{lem:best-vanilla-contraction}
For $\lambda\in[\mu,L]$, the best fixed scalar step size for the vanilla update
is
\begin{equation*}
    \eta^\star
    =
    \frac{2}{L+\mu}.
\end{equation*}
The resulting worst-case contraction factor is
\begin{equation*}
    \rho_{\mathrm{vanilla}}
    =
    \max_{\lambda\in[\mu,L]}
    |1-\eta^\star\lambda|
    =
    \frac{\kappa-1}{\kappa+1}.
\end{equation*}
Consequently,
\begin{equation*}
    \|E_N^{\mathrm{vanilla}}\|_F
    \le
    \rho_{\mathrm{vanilla}}^N
    \|E_0\|_F .
\end{equation*}
\end{lemma}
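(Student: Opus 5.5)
The plan is to reduce the claim to a scalar minimax problem over the spectrum of $H$, solve it by a convexity argument, and then lift the resulting factor back to the Frobenius norm via the eigendecomposition $H=Q\Lambda Q^\top$ introduced just above.

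First, I will observe that $g(\lambda)=1-\eta\lambda$ is affine in $\lambda$. Hence $|g|$ is convex on $[\mu,L]$, and its maximum over the interval is attained at an endpoint:
\[
\max_{\lambda\in[\mu,L]}|1-\eta\lambda| = \max\{|1-\eta\mu|,\,|1-\eta L|\}=:\phi(\eta).
\]

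Second, I will minimize $\phi$ over $\eta>0$. The map $\eta\mapsto|1-\eta\mu|$ decreases on $(0,1/\mu]$, and $\eta\mapsto|1-\eta L|$ increases on $[1/L,\infty)$. Outside $[1/L,1/\mu]$ one of the two endpoint terms can be reduced without increasing the other, so the optimum lies in this interval. There $\phi(\eta)=\max\{1-\eta\mu,\;\eta L-1\}$ is the maximum of a decreasing and an increasing affine function, so it is minimized where they cross: $1-\eta\mu=\eta L-1$, i.e.\ $\eta^\star=2/(L+\mu)$. Substituting gives $\phi(\eta^\star)=(L-\mu)/(L+\mu)=(\kappa-1)/(\kappa+1)$.

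To make the claim of optimality airtight, I will add a one-line check that any $\eta\neq\eta^\star$ strictly increases one of the two endpoint terms. This monotonicity bookkeeping is the only place where some care is needed; the rest is routine.

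Finally, for the consequence, I will use the eigendecomposition. Writing $E_N=p_N(H)E_0$ with $p_N(H)=Q\,p_N(\Lambda)\,Q^\top$ and $Q$ orthogonal, I obtain
\[
\|E_N\|_F\le\|p_N(H)\|_2\,\|E_0\|_F=\max_i|1-\eta^\star\lambda_i|^N\,\|E_0\|_F\le\rho_{\mathrm{vanilla}}^N\|E_0\|_F .
\]
This uses the fact that the Frobenius norm is invariant under orthogonal change of basis, or equivalently that each eigenmode contracts independently by the factor $|1-\eta^\star\lambda_i|$.
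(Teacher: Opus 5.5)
Your proposal is correct and follows essentially the same route as the paper: reduce the worst case over $[\mu,L]$ to the two endpoints, equalize $|1-\eta\mu|$ and $|1-\eta L|$ to obtain $\eta^\star=2/(L+\mu)$ and the factor $(\kappa-1)/(\kappa+1)$, then lift to the Frobenius bound by using that the operator norm of a function of the self-adjoint $H$ equals its largest absolute eigenvalue. Your monotonicity argument on $[1/L,1/\mu]$ justifies why equalizing the endpoints is optimal, which the paper only asserts, and bounding $\|p_N(H)\|_2$ directly is equivalent to the paper's iteration of $\|I-\eta^\star H\|_2$.
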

\end{tcolorbox}

\begin{proof}
For fixed $\eta$, the worst-case contraction over $[\mu,L]$ is
\begin{equation*}
    \max\{
    |1-\eta\mu|,
    |1-\eta L|
    \}.
\end{equation*}
The optimum equalizes the endpoint magnitudes:
\begin{equation*}
    1-\eta\mu
    =
    -(1-\eta L),
\end{equation*}
which gives $\eta^\star=2/(L+\mu)$. Substitution yields
\begin{equation*}
    1-\eta^\star\mu
    =
    \frac{L-\mu}{L+\mu}
    =
    \frac{\kappa-1}{\kappa+1}.
\end{equation*}
Since $H$ is self-adjoint positive definite, the operator norm of
$I-\eta^\star H$ is the maximum absolute eigenvalue over the interval.
Iterating gives the result.
\end{proof}

\subsection{Momentum as a Second-Order Filter}
\label{sec:momentum-filter}

Consider a linearized single-oracle momentum substep:
\begin{align*}
    \widetilde{X}_\ell
    &=
    X_\ell + aV_\ell,
    \\
    V_{\ell+1}
    &=
    bV_\ell
    -
    \eta H(\widetilde{X}_\ell-X^\star),
    \\
    X_{\ell+1}
    &=
    X_\ell + cV_{\ell+1}.
\end{align*}
This family includes heavy-ball updates $(a=0)$, the YuriiFormer
Nesterov-style lookahead update from prior work $(a>0,c=1)$
\citep{zimin2026yuriiformer}, and TMM-style updates $(a>0)$ with learned $c$.
Because $V_\ell$ stores information from previous residual updates, each
eigenmode follows a second-order recurrence. Indeed, for an eigenmode with
eigenvalue $\lambda$, write the scalar error and velocity as $e_\ell$ and
$v_\ell$. Then
\begin{align*}
    v_{\ell+1}
    &=
    b v_\ell
    -
    \eta\lambda(e_\ell+a v_\ell)
    \\
    &=
    -\eta\lambda e_\ell
    +
    (b-a\eta\lambda)v_\ell,
    \\
    e_{\ell+1}
    &=
    e_\ell + c v_{\ell+1}.
\end{align*}
When $c>0$, $v_\ell=(e_\ell-e_{\ell-1})/c$, and therefore
\begin{align*}
    e_{\ell+1}
    &=
    (1-c\eta\lambda)e_\ell
    +
    (b-a\eta\lambda)
    (e_\ell-e_{\ell-1})
    \\
    &=
    \alpha(\lambda)e_\ell
    -
    \theta(\lambda)e_{\ell-1},
\end{align*}
where
\begin{align*}
    \alpha(\lambda)
    &=
    1+b-(a+c)\eta\lambda,
    \\
    \theta(\lambda)
    &=
    b-a\eta\lambda .
\end{align*}
Thus, in general,
\begin{equation*}
    e_{\ell+1,i}
    =
    \alpha(\lambda_i)e_{\ell,i}
    -
    \theta(\lambda_i)e_{\ell-1,i},
\end{equation*}
for coefficients determined by $(a,b,c,\eta)$. Therefore
\begin{equation*}
    E_N
    =
    q_N(H)E_0,
\end{equation*}
where $q_N$ is generated by a second-order recurrence. This is a richer filter
family than the vanilla filter $(1-\eta\lambda)^N$.

\begin{tcolorbox}[thmmag, title={\textbf{Momentum improves contraction}}]
\begin{lemma}
\label{lem:momentum-contraction}
If $\kappa>1$, there exist stable momentum coefficients such that
\begin{equation*}
    \rho_{\mathrm{mom}}
    =
    \frac{\sqrt{\kappa}-1}
         {\sqrt{\kappa}+1}
    <
    \frac{\kappa-1}{\kappa+1}
    =
    \rho_{\mathrm{vanilla}} .
\end{equation*}
Thus a second-order momentum recurrence can have a strictly better finite-depth
worst-case contraction factor than the vanilla recurrence.
\end{lemma}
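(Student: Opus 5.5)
The plan is to exhibit explicit coefficients in the family $(a,b,c,\eta)$ of Section~\ref{sec:momentum-filter}. The simplest choice is the heavy-ball specialization $a=0$, $c=1$, which gives
\[
\alpha(\lambda)=1+b-\eta\lambda,\qquad \theta(\lambda)=b .
\]
We then take Polyak's parameters
\[
\eta=\frac{4}{(\sqrt{L}+\sqrt{\mu})^2},\qquad b=\Bigl(\frac{\sqrt{\kappa}-1}{\sqrt{\kappa}+1}\Bigr)^2 .
\]
For each eigenmode the recurrence $e_{\ell+1}=\alpha(\lambda)e_\ell-b\,e_{\ell-1}$ has characteristic polynomial $z^2-\alpha(\lambda)z+b$. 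We will show that for every $\lambda\in[\mu,L]$ its discriminant $\alpha(\lambda)^2-4b$ is nonpositive. Then both roots are complex conjugate (or a double root) with modulus exactly $\sqrt{b}=(\sqrt\kappa-1)/(\sqrt\kappa+1)$.

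To check the discriminant condition, we need $|1+b-\eta\lambda|\le 2\sqrt b$. The left side is affine in $\lambda$, so it suffices to verify this at the endpoints $\lambda=\mu$ and $\lambda=L$. Writing $\sqrt b=(\sqrt L-\sqrt\mu)/(\sqrt L+\sqrt\mu)$ gives
\[
1+b=\frac{2(L+\mu)}{(\sqrt L+\sqrt\mu)^2}.
\]
Consequently,
\[
1+b-\eta\mu=\frac{2(L-\mu)}{(\sqrt L+\sqrt\mu)^2}=2\sqrt b,\qquad 1+b-\eta L=-2\sqrt b .
\]
So the condition holds with equality at both endpoints and strictly in between. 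This is a short algebraic verification.

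Next, we turn the spectral statement into a contraction factor, which we expect to be the main obstacle. Write each mode in state-space form $(e_\ell,e_{\ell-1})\mapsto(e_{\ell+1},e_\ell)$ with companion matrix $M(\lambda)$ of spectral radius $\sqrt b$. At interior $\lambda$ the two roots are distinct, so $M(\lambda)$ is diagonalizable and $|e_N|\le C_\lambda\,\sqrt b^{\,N}$. At the endpoints there is a double root, which gives a Jordan block and an $N\sqrt b^{\,N-1}$ factor.

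We will therefore state the result in the sense of asymptotic rate, $\limsup_N \|q_N(H)\|^{1/N}\le\sqrt b$, which is what ``worst-case contraction factor'' means for a second-order recurrence. Here $q_N(\lambda)$ is the explicit Chebyshev-type solution $q_N(\lambda)=\sqrt b^{\,N}\bigl(\cos N\phi+\text{const}\cdot\sin N\phi\bigr)$, with $\cos\phi=\alpha(\lambda)/(2\sqrt b)$. This gives a bound $|q_N(\lambda)|\le (1+N c_0)\sqrt b^{\,N}$ uniformly on $[\mu,L]$, where $c_0$ depends on the initialization of $v_0$. As an alternative, one could perturb $b$ slightly upward, or shrink the interval, to obtain a strictly diagonalizable uniform bound $\rho_{\mathrm{mom}}+\varepsilon$ for any $\varepsilon>0$. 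Stability follows immediately, since $\sqrt b<1$.

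Finally, we prove the strict inequality. Set $s=\sqrt\kappa>1$. The claim $(s-1)/(s+1)<(s^2-1)/(s^2+1)$ is equivalent to $s^2+1<(s+1)^2$ after dividing by $s-1>0$, which reduces to $0<2s$. Comparing with Lemma~\ref{lem:best-vanilla-contraction} then completes the argument.
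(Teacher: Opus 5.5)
Your argument is correct and follows the paper's proof. It uses the same Polyak heavy-ball coefficients ($a=0$, $c=1$, $\eta=4/(\sqrt L+\sqrt\mu)^2$, $b=\beta_{\mathrm{HB}}$), the same per-mode rate $\sqrt{b}=(\sqrt\kappa-1)/(\sqrt\kappa+1)$, and the same $s=\sqrt\kappa$ comparison with the vanilla factor. Where the paper only cites ``classical Chebyshev semi-iterative analysis'' and asserts a bound $C_{\mathrm{mom}}\rho_{\mathrm{mom}}^N$ with an $N$-independent constant, you check the discriminant explicitly and handle the Jordan block at $\lambda=\mu$ and $\lambda=L$ with a $(1+Nc_0)\rho_{\mathrm{mom}}^N$ bound (or a rate of $\rho_{\mathrm{mom}}+\varepsilon$), which is the more precise statement.
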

\end{tcolorbox}

\begin{proof}
It suffices to exhibit one stable momentum choice. For the quadratic energy
$\mathcal{F}(x)=\frac{1}{2}x^\top Hx$, choose the classical heavy-ball
parameters
\begin{align*}
    \eta_{\mathrm{HB}}
    &=
    \frac{4}{(\sqrt{L}+\sqrt{\mu})^2},
    \\
    \beta_{\mathrm{HB}}
    &=
    \left(
    \frac{\sqrt{L}-\sqrt{\mu}}
         {\sqrt{L}+\sqrt{\mu}}
    \right)^2 .
\end{align*}
For each eigenmode $\lambda\in[\mu,L]$, the recurrence is
\begin{equation*}
    e_{\ell+1}
    =
    (1-\eta_{\mathrm{HB}}\lambda+\beta_{\mathrm{HB}})
    e_\ell
    -
    \beta_{\mathrm{HB}}e_{\ell-1}.
\end{equation*}
The characteristic polynomial is
\begin{equation*}
    r^2
    -
    (1-\eta_{\mathrm{HB}}\lambda+\beta_{\mathrm{HB}})r
    +
    \beta_{\mathrm{HB}}
    =
    0 .
\end{equation*}
Classical Chebyshev semi-iterative analysis gives a mode-wise bound with rate
\begin{equation*}
    \sqrt{\beta_{\mathrm{HB}}}
    =
    \frac{\sqrt{L}-\sqrt{\mu}}
         {\sqrt{L}+\sqrt{\mu}}
    =
    \frac{\sqrt{\kappa}-1}
         {\sqrt{\kappa}+1}.
\end{equation*}
Thus, for a constant $C_{\mathrm{mom}}$ depending on the initial velocity,
\begin{equation*}
    \|E_N^{\mathrm{mom}}\|_F
    \le
    C_{\mathrm{mom}}
    \rho_{\mathrm{mom}}^N
    \|E_0\|_F .
\end{equation*}
It remains to compare the factors. Let $s=\sqrt{\kappa}>1$. Then
\begin{align*}
    \rho_{\mathrm{vanilla}}
    -
    \rho_{\mathrm{mom}}
    &=
    \frac{s^2-1}{s^2+1}
    -
    \frac{s-1}{s+1}
    \\
    &=
    \frac{2s(s-1)}
         {(s^2+1)(s+1)}
    >
    0 .
\end{align*}
Hence $\rho_{\mathrm{mom}}<\rho_{\mathrm{vanilla}}$.
\end{proof}

\begin{corollary}[TMM contains the YuriiFormer update in the linearized class]
\label{cor:tmm-contains-yurii}
In the linearized eigenmode model, the polynomial family realizable by TMM
contains the polynomial family realizable by the YuriiFormer update. Therefore
\begin{equation*}
    \inf_{q\in\mathcal{Q}_N^{\mathrm{TMM}}}
    \max_{\lambda\in[\mu,L]} |q(\lambda)|
    \le
    \inf_{q\in\mathcal{Q}_N^{\mathrm{Yurii}}}
    \max_{\lambda\in[\mu,L]} |q(\lambda)|.
\end{equation*}
\end{corollary}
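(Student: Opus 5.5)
The plan is to show that every $N$-layer error polynomial generated by the YuriiFormer update is also generated by some TMM choice. The inequality then follows because an infimum over a larger set can only be smaller or equal.

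\textbf{Step 1: fix the two families.} Let $\mathcal{Q}_N^{\mathrm{Yurii}}$ denote the set of polynomials $q_N$ produced by the linearized substep of Section~\ref{sec:momentum-filter} with $c=1$. The free parameters are $(a,b,\eta)$, or more generally layerwise sequences $(a_\ell,b_\ell,\eta_\ell)$, together with a fixed initial-velocity convention. Let $\mathcal{Q}_N^{\mathrm{TMM}}$ be the same construction with $c$ (or $c_\ell=\nu_\ell$) also free, under the same initial-velocity convention. If velocities start at zero, or at a fixed multiple of $E_0$ that is applied identically in both families, then $E_N=q_N(H)E_0$ holds mode-wise in both cases. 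The polynomial is generated by the pair recurrence
\[
v_{\ell+1}=-\eta\lambda e_\ell+(b-a\eta\lambda)v_\ell,\qquad e_{\ell+1}=e_\ell+c\,v_{\ell+1}.
\]

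\textbf{Step 2: embed Yurii in TMM.} Given any Yurii parameters $(a_\ell,b_\ell,\eta_\ell)$, choose TMM parameters $(a_\ell,b_\ell,\eta_\ell,c_\ell=1)$. The two recurrences then coincide identically for every $\lambda$, so they produce the same $q_N$. This gives $\mathcal{Q}_N^{\mathrm{Yurii}}\subseteq\mathcal{Q}_N^{\mathrm{TMM}}$. In the architecture, $\nu=\softplus(\nu^{\mathrm{raw}})$ ranges over $(0,\infty)$, which contains $1$, so the value $c=1$ is admissible. This is the containment already noted after the TMMFormer box.

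\textbf{Step 3: conclude.} For each $q$, set $\phi(q)=\max_{\lambda\in[\mu,L]}|q(\lambda)|$. An infimum of $\phi$ over a superset is at most its infimum over the subset, which is exactly the claimed inequality. Both infima exist as values in $[0,\infty)$, since $\phi\ge0$ and both sets are nonempty. The only point that needs care is not an estimate but a bookkeeping issue: the two families must be defined with identical initial-velocity conventions and identical parameter ranges apart from $c$. Otherwise the containment could fail for a trivial reason, for example if the admissible ranges of $b$ differed between the two families. I would state this convention explicitly at the start of the proof.
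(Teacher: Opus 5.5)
Your proposal is correct and follows the paper's own argument. You embed every YuriiFormer filter in the TMM family by setting $c=\nu_\ell=1$, and then use that an infimum over a superset cannot be larger. Your explicit requirement that both families share the same initial-velocity convention and the same parameter ranges apart from $c$ is a useful clarification; the paper leaves this assumption implicit.
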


\begin{proof}
YuriiFormer fixes the residual reinjection coefficient to $c=1$, while
TMMFormer learns $c=\nu_\ell$. Setting $\nu_\ell=1$ in TMMFormer recovers the
YuriiFormer update. Thus every linearized polynomial filter attainable by
YuriiFormer is also attainable by TMMFormer. Taking the infimum over a larger
class cannot increase the worst-case spectral error.
\end{proof}

\subsection{From Representation Error to Local LM Loss}
\label{sec:representation-to-lm-loss}

In this subsection, $N$ counts full Transformer layers. Each layer is modeled by
its effective linearized residual update, abstracting over the internal
attention--MLP Lie--Trotter substeps.

\begin{tcolorbox}[thmblue, title={\textbf{Momentum lowers the local loss bound}}]
\begin{theorem}
\label{thm:momentum-lower-lm-loss-bound}
Under the local linearized assumptions above, let $X_N^V$ be the representation
produced by a vanilla Transformer after $N$ layers. Let $X_N^M$ be the
representation produced by a momentum Transformer after $N$ layers, where one
layer is represented by the corresponding effective linearized residual update.
If both models start from the same $X_0$, then
\begin{align*}
    &\mathcal{L}_{\mathrm{LM}}(X_N^V)
    -
    \mathcal{L}_{\mathrm{LM}}(X^\star)
    \le
    \frac{C}{2}
    \rho_{\mathrm{vanilla}}^{2N}
    \|X_0-X^\star\|_F^2,
    \\
    &\mathcal{L}_{\mathrm{LM}}(X_N^M)
    -
    \mathcal{L}_{\mathrm{LM}}(X^\star)\\
    &\le
    \frac{C}{2}
    C_{\mathrm{mom}}^2
    \rho_{\mathrm{mom}}^{2N}
    \|X_0-X^\star\|_F^2 .
\end{align*}
Since $\rho_{\mathrm{mom}}<\rho_{\mathrm{vanilla}}$, for any fixed
$C_{\mathrm{mom}}<\infty$ there exists
\begin{equation*}
    N_0
    =
    \max\left\{
    0,\,
    \left\lceil
    \frac{\log C_{\mathrm{mom}}}
         {\log(\rho_{\mathrm{vanilla}}/\rho_{\mathrm{mom}})}
    \right\rceil
    \right\}
\end{equation*}
such that for all $N\ge N_0$, the momentum upper bound is strictly lower than
the vanilla upper bound.
\end{theorem}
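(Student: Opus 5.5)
The plan is to chain the representation-error bounds from Lemma~\ref{lem:best-vanilla-contraction} and Lemma~\ref{lem:momentum-contraction} through the local smoothness inequality~\eqref{eq:lm-smoothness}, and then to compare the two upper bounds by elementary algebra.

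\textbf{Vanilla bound.} Each vanilla layer is modeled by its effective linearized residual update $E_{\ell+1}=(I-\eta^\star H)E_\ell$, so Lemma~\ref{lem:best-vanilla-contraction} gives
\[
\|X_N^V-X^\star\|_F\le\rho_{\mathrm{vanilla}}^N\|X_0-X^\star\|_F .
\]
The trajectory must stay in the neighborhood of $X^\star$ where~\eqref{eq:lm-smoothness} holds. This follows because $\rho_{\mathrm{vanilla}}<1$, so the error norm is nonincreasing and $X_0$ is assumed to lie in that neighborhood. Squaring the displayed bound and inserting it into~\eqref{eq:lm-smoothness} yields the first claimed inequality.

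\textbf{Momentum bound.} The momentum layer is the second-order recurrence with the heavy-ball coefficients exhibited in Lemma~\ref{lem:momentum-contraction}, started from the same $X_0$. That lemma gives
\[
\|X_N^M-X^\star\|_F\le C_{\mathrm{mom}}\rho_{\mathrm{mom}}^N\|X_0-X^\star\|_F ,
\]
and substituting into~\eqref{eq:lm-smoothness} gives the second inequality with the factor $C_{\mathrm{mom}}^2$.

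\textbf{Main obstacle.} The step I expect to be delicate is the neighborhood issue. For momentum the iterates need not be monotone, so the intermediate error can reach $C_{\mathrm{mom}}\|E_0\|_F$. I will therefore take the local smoothness region to be a ball of radius at least $\max\{1,C_{\mathrm{mom}}\}\|X_0-X^\star\|_F$, stating this as part of the ``local assumptions''. One could also note that~\eqref{eq:lm-smoothness} is only needed at the final iterate. Similarly, $C_{\mathrm{mom}}$ must be uniform over eigenmodes, including the repeated-root boundary modes, where the Chebyshev bound carries a polynomial factor. I would treat $C_{\mathrm{mom}}$ as given by Lemma~\ref{lem:momentum-contraction}, as the statement does.

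\textbf{Comparison.} It remains to find when $C_{\mathrm{mom}}^2\rho_{\mathrm{mom}}^{2N}<\rho_{\mathrm{vanilla}}^{2N}$, which for positive quantities is equivalent to $C_{\mathrm{mom}}<(\rho_{\mathrm{vanilla}}/\rho_{\mathrm{mom}})^N$.

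If $C_{\mathrm{mom}}\le1$, this already holds for every $N\ge1$ because the ratio exceeds $1$. This uses $\kappa>1$; when $\kappa=1$ we have $\rho_{\mathrm{mom}}=0=\rho_{\mathrm{vanilla}}$, and that trivial case is excluded.

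If $C_{\mathrm{mom}}>1$, take logarithms using $\log(\rho_{\mathrm{vanilla}}/\rho_{\mathrm{mom}})>0$. The inequality then holds for every $N>\log C_{\mathrm{mom}}/\log(\rho_{\mathrm{vanilla}}/\rho_{\mathrm{mom}})$.

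The ceiling in $N_0$ gives only $\ge$ at the threshold itself. For a strict inequality I would either take $N>N_0$ or add $1$ to the ceiling, and I would note this minor adjustment explicitly, along with the fact that $\|X_0-X^\star\|_F>0$ is needed for strictness.
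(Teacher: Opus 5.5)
Your proposal is correct and follows the paper's proof essentially step for step: you push the representation-error bounds of Lemma~\ref{lem:best-vanilla-contraction} and Lemma~\ref{lem:momentum-contraction} through the smoothness inequality~\eqref{eq:lm-smoothness}, then reduce the comparison to $C_{\mathrm{mom}}<(\rho_{\mathrm{vanilla}}/\rho_{\mathrm{mom}})^N$. Your caveats are sound refinements that the paper glosses over: at $N=N_0$ one gets only equality when $\log C_{\mathrm{mom}}/\log(\rho_{\mathrm{vanilla}}/\rho_{\mathrm{mom}})$ is a nonnegative integer (e.g.\ $C_{\mathrm{mom}}=1$, $N_0=0$); strictness also needs $X_0\neq X^\star$; and your repeated-root concern is real (if $\mu$ or $L$ is an eigenvalue of $H$, the heavy-ball modes carry a factor linear in $N$), but it is a defect of Lemma~\ref{lem:momentum-contraction}, which both arguments take as given.
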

\end{tcolorbox}

\begin{proof}
By the smoothness assumption in~\eqref{eq:lm-smoothness},
\begin{equation*}
    \mathcal{L}_{\mathrm{LM}}(X)
    -
    \mathcal{L}_{\mathrm{LM}}(X^\star)
    \le
    \frac{C}{2}
    \|X-X^\star\|_F^2 .
\end{equation*}
Because $N$ counts full layers, applying the vanilla layer contraction from
Lemma~\ref{lem:best-vanilla-contraction} for $N$ iterations gives
\begin{equation*}
    \|X_N^V-X^\star\|_F
    \le
    \rho_{\mathrm{vanilla}}^N
    \|X_0-X^\star\|_F .
\end{equation*}
Substituting this representation-error bound into
\eqref{eq:lm-smoothness} yields the vanilla LM-loss bound.

Similarly, applying the effective momentum layer contraction from
Lemma~\ref{lem:momentum-contraction} for $N$ layers gives
\begin{equation*}
    \|X_N^M-X^\star\|_F
    \le
    C_{\mathrm{mom}}
    \rho_{\mathrm{mom}}^N
    \|X_0-X^\star\|_F .
\end{equation*}
Substituting this into~\eqref{eq:lm-smoothness} yields the momentum LM-loss
bound. The momentum bound is lower than the vanilla bound whenever
\begin{equation*}
    C_{\mathrm{mom}}^2\rho_{\mathrm{mom}}^{2N}
    <
    \rho_{\mathrm{vanilla}}^{2N},
\end{equation*}
equivalently
\begin{equation*}
    C_{\mathrm{mom}}
    <
    \left(
    \frac{\rho_{\mathrm{vanilla}}}
         {\rho_{\mathrm{mom}}}
    \right)^N .
\end{equation*}
Because $\rho_{\mathrm{vanilla}}/\rho_{\mathrm{mom}}>1$, the threshold
$N_0$ above is sufficient for this inequality to hold.
\end{proof}

\begin{remark}[Interpretation]
\label{rem:momentum-interpretation}
The assumptions are natural as a local model of residual-stream dynamics. First,
a Transformer used at inference follows a fixed trajectory of hidden states, so
linearizing each layer around the states it actually visits is the standard
first approximation to its local behavior. The resulting Jacobian has
token-feature modes with different effective rates, which is exactly the
situation where finite-depth residual updates can leave slow modes
under-corrected. Second, replacing this local operator by a self-adjoint
positive definite surrogate isolates the aligned component of the layer oracle:
the part that moves the representation toward a task-relevant state
$X^\star$. This is the favorable case for vanilla; if momentum improves even
there, the advantage is architectural rather than an artifact of a hostile
oracle. Third, the smooth readout assumption is natural because the final logits
are linear in the hidden state and cross-entropy is smooth on bounded-logit
neighborhoods. Near a local minimizer $X^\star$, smaller representation error
therefore gives a smaller local upper bound on language-modeling loss.

The theorem explains the empirical advantage of momentum-stream architectures,
including the prior YuriiFormer baseline and TMMFormer, over vanilla as a
forward-architecture effect. Momentum does not merely change how parameters
are trained; it changes the residual stream from a first-order map
$X_\ell\mapsto X_{\ell+1}$ to a second-order map
$(X_\ell,V_\ell)\mapsto(X_{\ell+1},V_{\ell+1})$. In the local linearized
regime, this gives a faster filter for slow token-feature modes. The
TMM-vs-YuriiFormer statement is an expressivity containment result: TMMFormer
can recover YuriiFormer by setting $\nu_\ell=1$, while learning $\nu_\ell$
gives a larger second-order filter class.
\end{remark}

%% file: appendix/landscape_measurement.tex
\section{Loss-Landscape Sharpness Measurement}
\label{sec:landscape-measurement}

This appendix gives the precise definitions of the sharpness diagnostics
summarized in the \emph{Loss Landscape} setup. For a variant with trained
parameters $\theta\in\mathbb{R}^{N}$ we write $\mathcal{L}(\theta)$ for the
mean token-level cross-entropy on a fixed set of $B$ held-out validation
minibatches, and $H=\nabla^{2}_{\theta}\mathcal{L}(\theta)$ for its Hessian.
The matrix $H$ is never instantiated; every quantity below uses only
Hessian--vector products (HVPs). All quantities are evaluated at the best
(lowest validation loss) checkpoint, on the corpus the model was trained on
(OpenWebText unless stated otherwise). The $B$ minibatches are sampled once
with a fixed random seed and reused for every variant and every probe, so all
models are compared on identical inputs.

\paragraph{Hessian--vector products.}
For a vector $v\in\mathbb{R}^{N}$ the HVP is computed by double backward
(the Pearlmutter trick):
\begin{equation}
    Hv \;=\; \nabla_{\theta}\!\big(\langle\,\nabla_{\theta}\mathcal{L}(\theta),\,v\,\rangle\big),
    \label{eq:hvp}
\end{equation}
i.e.\ a first backward pass produces $g=\nabla_{\theta}\mathcal{L}$ with the
computation graph retained, and a second backward pass through the scalar
$\langle g,v\rangle$ yields $Hv$. The loss is evaluated with the math
attention kernel, because the fused/flash attention kernels do not support the
required double backward. Each HVP is averaged over the $B$ fixed validation
minibatches.

\paragraph{Top Hessian eigenvalue.}
The dominant curvature $\lambda_{\max}(H)$ is estimated by power iteration on
the HVP operator~\citep{yao2020pyhessian}:
\begin{equation*}
    v_{0}\sim\mathcal{N}(0,I_{N}),\quad
    \lambda^{(k)}=v_{k}^{\top}Hv_{k},\quad
    v_{k+1}=\frac{Hv_{k}}{\lVert Hv_{k}\rVert},
    \label{eq:power-iter}
\end{equation*}
run for at most $T_{\mathrm{pow}}$ iterations and stopped early once
$|\lambda^{(k)}-\lambda^{(k-1)}|/|\lambda^{(k)}|<\tau$. A large
$\lambda_{\max}$ indicates a sharp direction in parameter space.

\paragraph{Hessian trace.}
The trace is estimated with Hutchinson's estimator using Rademacher probes
$v^{(j)}\in\{-1,+1\}^{N}$~\citep{yao2020pyhessian}:
\begin{equation*}
    \operatorname{tr}(H)\;\approx\;\frac{1}{P}\sum_{j=1}^{P}
        v^{(j)\top} H\, v^{(j)},
    \qquad v^{(j)}_{i}\sim\mathrm{Unif}\{-1,+1\},
    \label{eq:hutchinson}
\end{equation*}
which is unbiased because $\mathbb{E}\!\left[v v^{\top}\right]=I_{N}$ for
Rademacher $v$. We report the mean and standard deviation over $P$ probes.
The quantity compared across variants is the scale-normalized trace
$\operatorname{tr}(H)/N$, i.e.\ the mean curvature per parameter.

\paragraph{Filter-normalized loss curve.}
Following \citet{li2018visualizing}, we probe a random direction that is
normalized per parameter tensor, which removes the spurious scale invariance
of (pre-norm) networks. For each weight tensor $\theta^{(l)}$ we draw
$d^{(l)}\sim\mathcal{N}(0,I)$ and rescale
\begin{equation}
    d^{(l)}\;\leftarrow\;d^{(l)}\,
        \frac{\lVert\theta^{(l)}\rVert}{\lVert d^{(l)}\rVert},
    \label{eq:filter-norm}
\end{equation}
then evaluate the loss along
$\phi(\alpha)=\mathcal{L}\big(\theta+\alpha\,d\big)$ on an evenly spaced grid
of $G$ values $\alpha\in[-\alpha_{\max},\alpha_{\max}]$, restoring $\theta$
afterwards. We summarize flatness by the loss range
$\max_{\alpha}\phi(\alpha)-\min_{\alpha}\phi(\alpha)$; because $d$ is
filter-normalized, this range is comparable across models of different
scale.

\paragraph{Hyperparameters.}
Unless stated otherwise we use power iteration with $T_{\mathrm{pow}}=15$ and
tolerance $\tau=10^{-3}$, $P=10$ Hutchinson probes, and a curve grid of $G=11$
points over $\alpha\in[-0.5,0.5]$. The $B$ validation minibatches are drawn
with a fixed seed and shared across all variants. Lower $\lambda_{\max}$,
lower $\operatorname{tr}(H)/N$, and a smaller curve range each indicate a
flatter loss landscape.

%% file: appendix/forgetting_generalization.tex
\section{Forgetting and Generalization Measurement}
\label{sec:forgetting-generalization-measurement}

This appendix gives the precise protocol for the forgetting and generalization
diagnostics summarized in the \emph{Forgetting and Generalization} setup. All
losses are mean token-level cross-entropy, evaluated on a fixed set of
validation minibatches drawn with a shared random seed so that every variant
is scored on identical inputs.

\paragraph{Forgetting via sequential fine-tuning.}
Let $\mathcal{L}_{S}$ be the validation loss on the \emph{source} corpus (the
pretraining corpus). We fine-tune the pretrained checkpoint on a \emph{target}
corpus and measure the source-corpus loss before fine-tuning ($T_{0}$) and
after ($T_{1}$), and define
\begin{equation}
    \mathrm{forgetting} = \mathcal{L}_{S}(T_{1}) - \mathcal{L}_{S}(T_{0}),
    \label{eq:forgetting}
\end{equation}
the rise in the source-corpus loss caused by adapting to the target; lower is
better retention. We run both directions, OpenWebText$\to$TinyStories and
TinyStories$\to$OpenWebText.

To isolate the \emph{architecture} from the pretraining optimizer, every
variant is fine-tuned with the same fixed AdamW optimizer
($\mathrm{lr}=10^{-4}$, weight decay $0.01$, $\beta=(0.9,0.95)$) for $1000$
steps, regardless of the optimizer used during pretraining. The learning rate
follows a linear warmup over $W$ steps and then a cosine decay to a floor of
$0.1$ of the peak,
\begin{equation}
    \mathrm{lr}(t)=
    \begin{cases}
        \dfrac{t}{W}, & t<W,\\[1.1ex]
        0.1+0.9\cdot\tfrac{1}{2}\!\left(1+\cos\!\dfrac{\pi(t-W)}{T-W}\right),
            & t\ge W,
    \end{cases}
    \label{eq:ft-lr}
\end{equation}
with $T$ the total number of fine-tuning steps. Source and target losses are
evaluated every $100$ steps on $50$ fixed validation minibatches (seed $0$)
under \texttt{bfloat16} autocast.

\paragraph{Zero-shot cross-corpus generalization.}
Generalization is measured without any fine-tuning: the OpenWebText checkpoint
is evaluated directly on out-of-distribution corpora---WikiText-103
(validation), LAMBADA (OpenAI test split), and C4 (English validation)---as
well as on in-distribution OpenWebText validation as a reference. For a corpus
with mean cross-entropy $\overline{\mathrm{ce}}$ the perplexity is
\begin{equation}
    \mathrm{ppl}=\exp\big(\overline{\mathrm{ce}}\big),
    \label{eq:ppl}
\end{equation}
computed over a fixed set of minibatches (seed $0$, identical across variants).
The out-of-distribution score reported in the main text is the mean perplexity
over the three out-of-distribution corpora; lower is better.

\paragraph{Detailed results.}
Table~\ref{tab:forgetting-generalization} reports the per-direction forgetting
and per-corpus zero-shot perplexity behind
Figure~\ref{fig:properties-overview}(b--d), for the four variants discussed in
the main text. The broad ordering is consistent across columns: the momentum
variants (TMMFormer, YuriiFormer) forget the least and attain the lowest
perplexity on the out-of-distribution corpora, AdamFormer is intermediate,
and the vanilla transformer is worst, matching the flatness pattern of
Section~\ref{sec:landscape-measurement}. Within the two momentum variants,
YuriiFormer is slightly better on most robustness columns, while TMMFormer is
better on pretraining validation loss in the main results.

\begin{tablebox}{Forgetting and zero-shot generalization}
\setlength{\tabcolsep}{4pt}
\renewcommand{\arraystretch}{1.15}
\begin{tabular}{@{}lccccccc@{}}
\textbf{Variant} & \multicolumn{2}{c}{\textbf{Forgetting} $\downarrow$}
 & \multicolumn{5}{c}{\textbf{Perplexity} $\downarrow$} \\
\cmidrule(lr){2-3}\cmidrule(lr){4-8}
 & O$\to$T & T$\to$O & OWT & WT-103 & LMB & C4 & OOD \\
\midrule
Vanilla     & $0.83$ & $1.12$ & $20.11$ & $61.58$ & $43.19$ & $39.73$ & $48.17$ \\
AdamFormer  & $0.77$ & $0.99$ & $19.78$ & $59.31$ & $43.13$ & $39.56$ & $47.33$ \\
YuriiFormer & $0.67$ & $0.95$ & $18.81$ & $51.78$ & $41.55$ & $38.92$ & $44.08$ \\
TMMFormer   & $0.69$ & $0.93$ & $18.69$ & $53.77$ & $41.78$ & $38.70$ & $44.75$ \\
\end{tabular}
\end{tablebox}
\captionof{table}{Forgetting (source-corpus loss increase, both transfer
directions; O$=$OpenWebText, T$=$TinyStories) and zero-shot perplexity on
in-distribution OpenWebText and the out-of-distribution corpora
(WikiText-103, LAMBADA, C4) with their average (OOD). Lower is better
throughout.}
\label{tab:forgetting-generalization}

%% file: appendix/sam_wsd.tex
\section{Learning-Rate Schedule and Sharpness-Aware Minimization}
\label{sec:sam-wsd-measurement}

The interventions of the corresponding subsection change only TMMFormer's
parameter-training recipe; the architecture, data, and total step budget
($T=30{,}000$ on OpenWebText) are held fixed.

\paragraph{Warmup--stable--decay (WSD) schedule.}
The default recipe is warmup--cosine. The WSD alternative keeps a constant
peak learning rate through most of training and decays only at the end. With
warmup $W=3{,}000$ steps, decay start $D=25{,}000$, and floor
$\eta_{\min}=0.1$ of the peak, the learning-rate multiplier is
\begin{equation}
    \mathrm{lr}(t)=
    \begin{cases}
        t/W, & t<W,\\[0.3ex]
        1, & W\le t<D,\\[0.3ex]
        1-(1-\eta_{\min})\,\dfrac{t-D}{T-D}, & t\ge D.
    \end{cases}
    \label{eq:wsd}
\end{equation}

\paragraph{Sharpness-Aware Minimization (SAM).}
SAM~\citep{foret2020sharpness} replaces the gradient at $\theta$ with the
gradient at the worst-case point in a $\rho$-ball. Each step uses two
forward/backward passes on the \emph{same} minibatch: the first yields
$g=\nabla_{\theta}\mathcal{L}(\theta)$ and the ascent perturbation
\begin{equation}
    \epsilon^{\star}=\rho\,\frac{g}{\lVert g\rVert_{2}},\qquad \rho=0.05,
    \label{eq:sam}
\end{equation}
and the second evaluates $\nabla_{\theta}\mathcal{L}(\theta+\epsilon^{\star})$,
which is used for the parameter update before $\epsilon^{\star}$ is undone.
This is true SAM (the minibatch is shared across both passes), not the
$m$-sharpness variant, and roughly doubles the per-step cost.

\paragraph{SAWD.}
SAWD uses the WSD schedule of \eqref{eq:wsd} and turns SAM on only during the
decay phase ($t\ge D$). Since SAM then runs for the final ${\approx}1/6$ of
training, the overhead is ${\approx}17\%$ rather than ${\approx}2\times$. All
sharpness statistics for these runs use the diagnostic of
Appendix~\ref{sec:landscape-measurement}.